\pgfplotsset{compat=newest}
\theoremstyle{plain}
\newtheorem{theorem}{Theorem}[section]
\newtheorem{lemma}[theorem]{Lemma}
\theoremstyle{definition}
\newtheorem{definition}[theorem]{Definition}
\theoremstyle{remark}
\begin{document}

\title{Physics-model-guided Worst-case Sampling for Safe Reinforcement Learning}


\author{Hongpeng Cao}
\affiliation{
  \institution{Technical University of Munich}
  \city{Munich}
  \country{Germany}}
\email{cao.hongpeng@tum.de}

\author{Yanbing Mao}
\affiliation{%
  \institution{Wayne State University}
  \city{Detroit}
  \country{USA}
}
\email{hm9062@wayne.edu}

\author{Lui Sha}
\affiliation{%
 \institution{University of Illinois Urbana-Champaign}
 \city{Urbana}
 \country{USA}}
 \email{lrs@illinois.edu}

\author{Marco Caccamo}
\affiliation{
  \institution{Technical University of Munich}
  \city{Munich}
  \country{Germany}}
\email{mcaccamo@tum.de}

\renewcommand{\shortauthors}{xxx et al.}

\begin{abstract}
Real-world accidents in learning-enabled CPS frequently occur in challenging corner cases. During the training of deep reinforcement learning (DRL) policy, the standard setup for training conditions is either fixed at a single initial condition or uniformly sampled from the admissible state space. This setup often overlooks the challenging but safety-critical corner cases. To bridge this gap, this paper proposes a physics-model-guided worst-case sampling strategy for training safe policies that can handle safety-critical cases toward guaranteed safety. Furthermore, we integrate the proposed worst-case sampling strategy into the physics-regulated deep reinforcement learning (Phy-DRL) framework to build a more data-efficient and safe learning algorithm for safety-critical CPS. We validate the proposed training strategy with Phy-DRL through extensive experiments on a simulated cart-pole system, a 2D quadrotor, a simulated and a real quadruped robot, showing remarkably improved sampling efficiency to learn more robust safe policies. 
\end{abstract}

\begin{CCSXML}
<ccs2012>
   <concept>
       <concept_id>10010147.10010178.10010213.10010214</concept_id>
       <concept_desc>Computing methodologies~Computational control theory</concept_desc>
       <concept_significance>500</concept_significance>
       </concept>
 </ccs2012>
\end{CCSXML}

\ccsdesc[500]{Computing methodologies~Computational control theory}

%
\keywords{Safety Critical Systems, Worst Case Sampling, Safe Deep Reinforcement Learning}


\maketitle

\section{Introduction}
Deep reinforcement learning (DRL) has been integrated into many cyber-physical systems (CPS; see examples in \cref{frame}), defining learning-enabled CPS that have succeeded tremendously in many complex control tasks. Notable examples range from autonomous driving \cite{kendall2019learning,kiran2021deep} to chemical processes \cite{savage2021model,he2021deep} to robot locomotion \cite{ibarz2021train,levine2016end}. Learning-enabled CPS promise to revolutionize many processes in different industries with tangible economic impact \cite{market1,market2}. However, the public-facing AI incident database \cite{AID} reveals that machine learning (ML) techniques, including DRL, can deliver much high performance but no safety assurance \cite{brief2021ai}.  For instance, in 2023, the US NHTSA reported nearly 224 crashes linked to self-driving and driver-assist technologies within a 9-month period~\cite{NHTSA}. Hence, a high-performance DRL with enhanced safety assurance is even more vital today, aligning well with the market’s need for ML safety.

\subsection{Related Work on Safe DRL}
To train a safe DRL policy, many literature adopts a constrained Markov decision process (CMDP) formulation, aiming to find a policy that jointly optimizes the objective of increasing the accumulated reward and decreasing the cost of safety violation \cite{achiam2017constrained, li2021augmented, wachi2020safe}. Furthermore, incorporating safety knowledge into the reward function design incentivizes the DRL to learn a safe policy. For instance, the control Lyapunov function (CLF) is widely used in constructing the safety-embedded reward \cite{perkins2002lyapunov, berkenkamp2017safe,chang2021stabilizing, zhao2023stable}. However, the safety of those learned policies can not be formally guaranteed due to the neural network parameterized policy, whose behaviors are hard to predict~\cite{huang2017adversarial} and verify~\cite{katz2017reluplex}.

Instead of focusing on learning a safe policy, the system-level safety framework sandboxes the unverified potential unsafe DRL policies regardless of the concrete design of the learning algorithm, and the safety is assured by an external verified safety controller \cite{{cai2024simplex, zhong2023towards, xiang2018verification, Humphrey2016Synthesis, Claviere2021Safety}}. However, those frameworks are often sensitive to the changes of the assumed dynamics models during the deployment.

Moreover, another focus of safe DRL has been shifted to integrating data-driven DRL action policy and physics-model-based action policy, leading to a residual action policy diagram \cite{rana2021bayesian,li2022equipping,cheng2019control,johannink2019residual, cheng2019end}. However, the physics models considered in those works are nonlinear and intractable, which thwarts delivering a verifiable safety, if not impossible. Recently, a physics-regulated deep reinforcement learning (Phy-DRL) framework \cite{Phydrl1, Phydrl2} is proposed to offer a promising solution to these challenges. Phy-DRL allows for the simplification of nonlinear system dynamics models into analyzable and verifiable linear models, which delivers a mathematically provable safety guarantee. Moreover, these linear models can then guide the construction of safety-embedded rewards and residual action policies. 

Nevertheless, in the aforementioned safe DRL frameworks, the policy learning setup is often fixed at a single initial condition or uniformly sampled from the admissible state space \cite{rana2021bayesian,li2022equipping,cheng2019control,johannink2019residual,cheng2019end,sasso2023posterior, Phydrl1, Phydrl2, muratore2022robot, liu2022goal, kirk2023survey}, which overlooks the challenging but safety-critical corner cases, potentially leading an unsafe policy.

\subsection{Open Problems}
In particular, incidents of learning-enabled CPS (e.g., self-driving cars) often occur in infrequent corner cases \cite{Phydrl1cos, Waymo00,bogdoll2021description}. This underscores that ``corner cases” induce a formidable safety challenge for DRL and other ML techniques. From a control-theoretic perspective, system-state samples close to the safety boundary represent the corner cases where a slight disturbance or fault can take a system out of control. Intuitively, focusing the training on such corner-case samples will enable a more robust and safe action policy. In the safe DRL community, how to define those corner cases and how to use them for learning safe policies remains unclear. 

\subsection{Core Contributions} \label{justcore}
To bridge the gap of training on corner cases in the existing literature, we propose a formal definition of the worst case for DRL based on the system's dynamics model. Furthermore, we propose an algorithm to efficiently generate the worst cases for policy learning. At last, we integrate the worst-case sampling into the Phy-DRL framework to learn safer and more robust policies. As shown in \cref{frame}, the integrated Phy-DRL framework defines worst-case samples as the state of the system located on the boundary of a safety envelope. These corner-case samples are not often visited during training via random sampling. Worst-case sampling thus lets Phy-DRL's training focus on the safety boundary, enabling a more robust and safe action policy.
We demonstrate the worst-case empowered Phy-DRL in three case studies including a cart-pole system, a 2D quadrotor, and a quadruped robot, showing remarkable improvement in sampling efficiency and safety assurance.

\begin{figure}
  \begin{center}
  \includegraphics[width=0.47\textwidth]{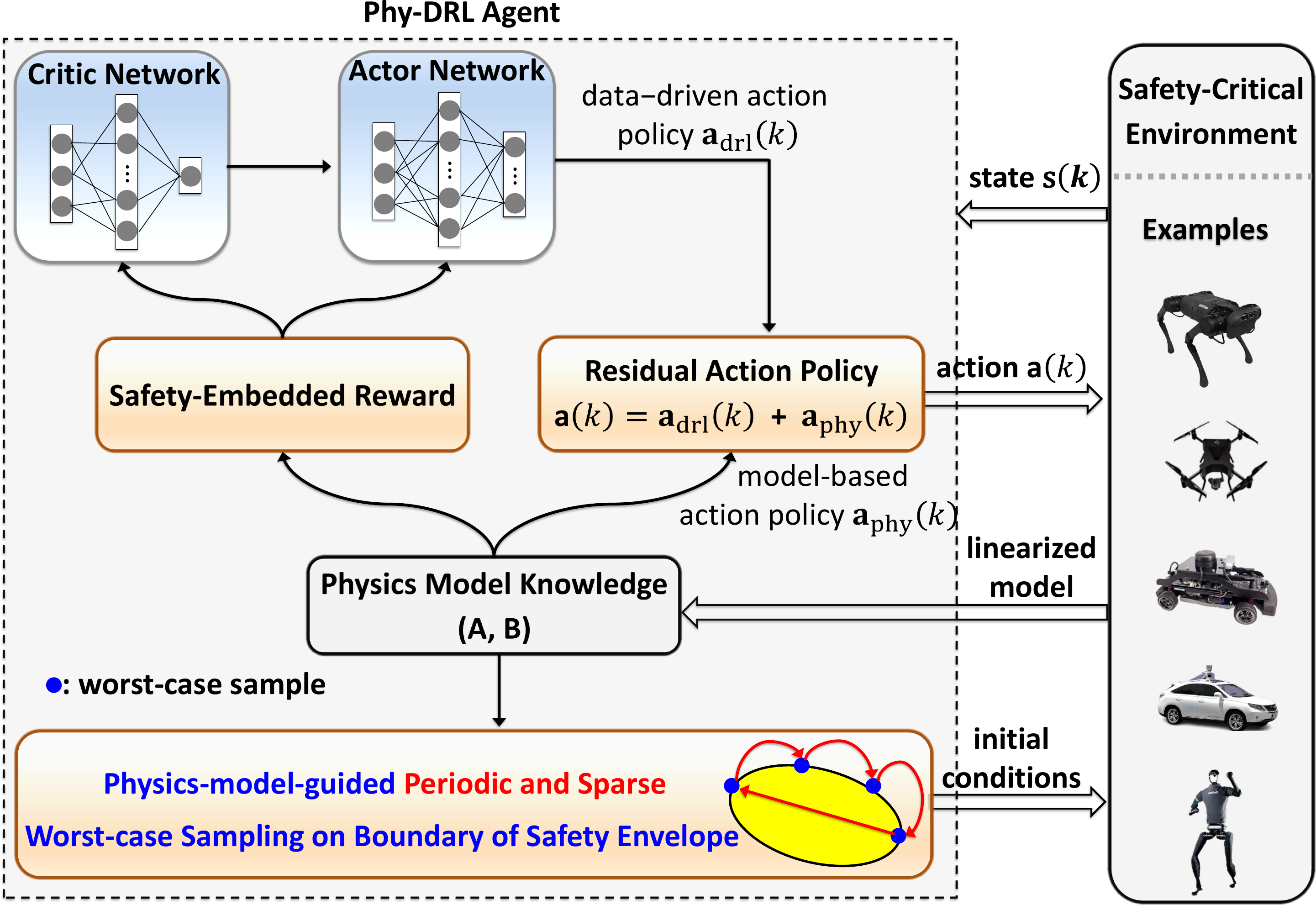}
  \end{center}
  \caption{Phy-DRL training powered by periodic and sparse worst-case sampling for safety-critical CPS.}
  \label{frame}
\end{figure}

\begin{table}[h]
\centering
\caption{Notations throughout Paper}
\begin{tabular}{|l|l|}
\hline
$\mathbb{R}^{n}$  &   set of $\emph{n}$-dimensional real vectors        \\ \hline
$\mathbb{N}$   &  set of natural numbers       \\ \hline
$[\mathbf{x}]_{i}$ & $i$-th entry of vector $\mathbf{x}$       \\ \hline
$[\mathbf{W}]_{i,:}$ & $i$-th row of matrix $\mathbf{W}$ \\ \hline$[\mathbf{W}]_{i,j}$ & matrix $\mathbf{W}$'s element at row $i$ and column $j$\\ \hline
$\mathbf{P} \succ \!(\prec)~0$ & matrix $\mathbf{P}$ is positive (negative) definite \\ \hline  
$\top$ & matrix or vector transposition  \\ \hline
$\left|  \cdot  \right|$ & set cardinality,  or absolute value \\ \hline
$\mathbf{I}_{n}$  &   $\emph{n}$-dimensional identity matrix       \\ \hline
$\mathbb{X} \setminus \Omega$ & complement set of $\Omega$  with respect to $\mathbb{X}$\\ \hline
\end{tabular} \label{notation}
\end{table}

\section{Preliminaries}
\subsection{Notations} \label{appnotation}
We summarize notations used through the paper in \cref{notation}

\subsection{Safety Definition}
The dynamics model of a real plant can be described by 
\begin{align}
\mathbf{s}(k+1) = {\mathbf{A}} \cdot \mathbf{s}(k) + {\mathbf{B}} \cdot \mathbf{a}(k) + \mathbf{f}(\mathbf{s}(k), \mathbf{a}(k)), ~k \in \mathbb{N}  \label{realsys}
\end{align}
where $\mathbf{f}(\mathbf{s}(k), \mathbf{a}(k)) \in \mathbb{R}^{n}$ is the unknown model mismatch, ${\mathbf{A}} \in \mathbb{R}^{n \times n}$ and ${\mathbf{B}} \in \mathbb{R}^{n \times m}$ denote known
system matrix and control structure matrix, respectively,  $\mathbf{s}(k) \in \mathbb{R}^{n}$ is system state, $\mathbf{a}(k) \in \mathbb{R}^{m}$ is action. The available knowledge of the model related to the real plant (\ref{realsys}) is represented by $\left({\mathbf{A},~ \mathbf{B}} \right)$. We are interested in an action policy that can constrain the system states to the safety set $\mathbb{X}$:
\begin{align}
{\mathbb{X}} \!\triangleq\! \left\{ {\left. \mathbf{s} \!\in\! {\mathbb{R}^n} \right|\underline{\mathbf{v}} \le {\mathbf{D}} \cdot \mathbf{s} - \mathbf{v} \le \overline{\mathbf{v}}}, \!~\mathbf{D} \in \mathbb{R}^{h \times n}, \!\!\right. \left. \text{with}~\mathbf{v}, \overline{\mathbf{v}}, \underline{\mathbf{v}} \in \mathbb{R}^{h}  \right\}, \label{aset2}
\end{align}
where $\mathbf{D}$, $\mathbf{v}$, $\overline{\mathbf{v}}$ and $\underline{\mathbf{v}}$ are given in advance for formulating $h \in \mathbb{N}$ safety conditions. To guarantee the system always stays in the safety set, we introduce a subset of the safety set $\mathbb{X}$ called safety envelope $\Omega$ based on \textit{Lyapunov-stability theorem}. 
\begin{align}
\text{Safety Envelope}~{\Omega} \triangleq \left\{ {\left. {\mathbf{s} \in {\mathbb{R}^n}} \right|{\mathbf{s}^\top}\cdot{\mathbf{P}}\cdot\mathbf{s} \le 1,~{\mathbf{P}} \succ 0} \right\}, \label{set3}
\end{align}
where $\mathbf{P}\in \mathbb{R}^{n \times n}$ is a positive definite matrix, that defines the shape of $\Omega$. With the safety envelope $\Omega$, the safety problem is defined as the follows:

\begin{definition}  \cite{Phydrl1}
Consider the safety set $\mathbb{X}$ \eqref{aset2} and the safety envelop $\Omega$ \eqref{set3}. The real plant \eqref{realsys} is said to be safe, if given any $\mathbf{s}(1) \in \Omega \subseteq   {\mathbb{X}}$, the $\mathbf{s}(k) \in \Omega \subseteq {\mathbb{X}}$ holds for any time $k \in \mathbb{N}$.
\label{defsafety}
\end{definition}

To guarantee a system controlled by a DRL agent staying in the safety envelope $\Omega$ is non-trivial due to its unverifiable action output. The recent literature Phy-DRL \cite{Phydrl1} suggests that incorporating the knowledge of the physics model into the standard DRL framework can significantly improve safety assurance toward guaranteed safety. We summarize the Phy-DRL framework in the next section.

\subsection{Phy-DRL Agent}
Phy-DRL is built on the deterministic policy algorithms \cite{lillicrap2015continuous, fujimoto2018addressing}. As shown in \cref{frame}, its control action is in residual form: 
\begin{align}
\mathbf{a}(k) = \underbrace{\mathbf{a}_{\text{drl}}(k)}_{\text{data-driven}} + \underbrace{\mathbf{a}_{\text{phy}}(k) ~(:=  \mathbf{F} \cdot \mathbf{s}(k))}_{\text{model-based}},\label{residual}
\end{align}
where $\mathbf{a}_{\text{drl}}(k)$ denotes a date-driven action from DRL, while $\mathbf{a}_{\text{phy}}(k)$ is a model-based action. Meanwhile, Phy-DRL embeds safety envelope \eqref{set3} into reward design, creating safety-embedded reward: 
\begin{align}\label{reward}
&\mathcal{R}( {\mathbf{s}(k),\mathbf{a}_{\text{drl}}}(k)) \\ \nonumber
&= \underbrace{\mathbf{s}^\top(k) \cdot  \mathbf{H}  \cdot \mathbf{s}(k)  - {\mathbf{s}^\top(k+1)} \cdot \mathbf{P} \cdot \mathbf{s}(k+1)}_{\triangleq ~ c(\mathbf{s}(k),~\mathbf{s}(k+1))}   ~+~ w( \mathbf{s}(k),\mathbf{a}(k)), 
\end{align}
where the term $w( \mathbf{s}(k),\mathbf{a}(k))$ aims at high-performance operations (e.g., minimizing energy consumption of resource-limited robots \cite{yang2021learning,gangapurwala2020guided}). The term $c(\mathbf{s}(k),\mathbf{s}(k+1))$ is safety-critical, in which 
\begin{equation}\label{hmatrix}
\mathbf{H} \triangleq  {{\overline{\mathbf{A}}^\top} \cdot \mathbf{P} \cdot \overline{\mathbf{A}}}, 
~\text{with}~\overline{\mathbf{A}} \buildrel \Delta \over = \mathbf{A} + {\mathbf{B} } \cdot {\mathbf{F}}~\text{and}~0 ~\prec~  \mathbf{H} ~\prec~ \alpha \cdot \mathbf{P},~~\alpha \in (0,1),
\end{equation}
where $\alpha$ is a pre-defined parameter to determine the decrease rate of the Lyapunov value. The matrix $\mathbf{P}$ is the matrix for building the safety envelope $\Omega$ \eqref{set3} and $\mathbf{F}$ is the feedback control law. With the available physics-model knowledge $(\mathbf{A}, \mathbf{B})$ at hand, the $\mathbf{F}$ and $\mathbf{P}$ can be computed using LMI toolbox \cite{gahinet1994lmi, boyd1994linear}. We refer interested readers to \cite{Phydrl1} for a more detailed explanation of LMI formulations. The intuition of the sub-reward $c(\mathbf{s}(k),\mathbf{s}(k+1))$ is that we encourage the DRL agent to learn a safe policy in conjunction with the model-based policy $a_{phy}$ to stabilize the real plant \eqref{realsys} toward the equilibrium point.

\begin{figure}[h]
    \subfloat[Sampled conditions on sphere.]{\includegraphics[width=0.22\textwidth]{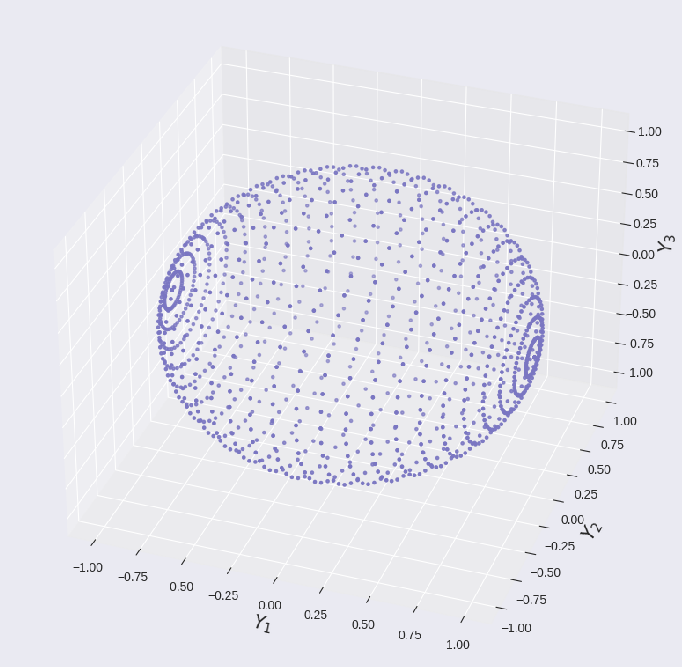}}\hfill
    \subfloat[Sample expression in three-dimensional space]{
        \begin{minipage}[t]{0.2\textwidth}
            \centering
            \begin{tikzpicture}[tdplot_main_coords, scale=1.2]
            \coordinate (O) at (0,0,0);
            \pgfmathsetmacro{\rvec}{.8}
            \pgfmathsetmacro{\thetavec}{30}
            \pgfmathsetmacro{\phivec}{60}
            \tdplotsetcoord{P}{\rvec}{\thetavec}{\phivec}
            \draw[-stealth,color=blue] (O) -- (P) node[right] {$(\overline{\mathbf{y}}_{1}, \overline{\mathbf{y}}_{2}, \overline{\mathbf{y}}_{3})$};
            \draw[dashed, color=blue] (O) -- (Pxy);
            \draw[dashed, color=blue] (P) -- (Pxy);
            \tdplotdrawarc{(O)}{0.15}{0}{\phivec}{anchor=north}{${\theta_1}$}
            \tdplotsetthetaplanecoords{\phivec}
            \tdplotdrawarc[tdplot_rotated_coords]{(0,0,0)}{0.2}{0}{\thetavec}{anchor=south east}{${\theta_2}$}
            \shade[ball color = lightgray, opacity = 0.3] (0,0,0) circle (1cm);
            \tdplotsetrotatedcoords{0}{0}{0};
            \draw[dashed, tdplot_rotated_coords, gray] (0,0,0) circle (1);
            \tdplotsetrotatedcoords{90}{90}{90};
            \draw[dashed, tdplot_rotated_coords, gray] (1,0,0) arc (0:180:1);
            \tdplotsetrotatedcoords{0}{90}{90};
            \draw[dashed, tdplot_rotated_coords, gray] (1,0,0) arc (0:180:1);
            \draw[-stealth] (0,0,0) -- (1.80,0,0) node[below left] {$Y_2$};
            \draw[-stealth] (0,0,0) -- (0,1.30,0) node[below right] {$Y_1$};
            \draw[-stealth] (0,0,0) -- (0,0,1.30) node[above] {$Y_3$};
            \draw[dashed, gray] (0,0,0) -- (-1,0,0);
            \draw[dashed, gray] (0,0,0) -- (0,-1,0);
            \draw[fill = lightgray!50] (P) circle (0.5pt);
            \end{tikzpicture}
        \end{minipage} 
    }
    \caption{Worst-case condition generation in for a three-dimensional $(n=3)$ safety envelope.}
    \label{fig: visualization}
\end{figure}

\section{Worst-case Sampling for Phy-DRL Training} 
\label{sec3}
The safety envelope is centered at the control equilibrium point. The plant is more likely to violate the safety constraint when its state is near the envelope boundary. For a DRL-controlled system, it is hard to certify and predict the output of the DRL output due to its non-convexity and non-linearity. Therefore, ensuring the safety of the DRL at the boundary of the envelope is critical. In this section, we propose a definition of the worst conditions for DRL in a safety-critical system and a practical algorithm to generate these conditions for DRL training and testing. 


\begin{definition}[Worst-case conditions]
Referring to the safety envelop in \cref{set3}, a state $\mathbf{s}$ is said to be a worst-case condition if ${\mathbf{s}^\top}\cdot{\mathbf{P}}\cdot\mathbf{s} = 1$ (i.e., locating on the boundary of safety envelope). 
\label{worstcasesampledef}
\end{definition} 

Recall that, given $n$ dimensional safety constraints, the solution for the safety envelope becomes a $n$ dimensional hyperellipsoid, to generate worst-case samples referring to \cref{worstcasesampledef}, we need to solve ${\mathbf{s}^\top}\cdot{\mathbf{P}}\cdot\mathbf{s} = 1$,  where $\mathbf{s} \in\mathbb{R}^n$. To achieve this, we present the following lemma for having explicit solutions of worst-case conditions.

\begin{lemma}
Given $\mathbf{P} \succ 0$, the solution of $\mathbf{s} \in \mathbb{R}^{n}$, being subject to ${\mathbf{s}^\top}\cdot {\mathbf{P}} \cdot \mathbf{s} = \varphi$, is 
\begin{align}
\mathbf{s} = \mathbf{Q}(\mathbf{P}) \cdot \mathbf{y}, ~~~~\text{with}~~[{\mathbf{y}}]_{i} = \begin{cases}
		\!\sqrt{\frac{\varphi}{{{\lambda _1}( \mathbf{P})}}} \cdot \sin ({{\theta_1}}) \cdot \prod\limits_{m = 2}^{n - 1} {\sin } ({{\theta _m}}), \!\!&i = 1 \\
        \!\sqrt{\frac{\varphi}{{{\lambda_i}( \mathbf{P})}}} \cdot \cos ({{\theta_{i-1}}}) \cdot \prod\limits_{m = i}^{n - 1} {\sin } ({{\theta _m}}), \!\!&i \ge 2 
	\end{cases} \label{solutionsboundary2}
\end{align}
where $\mathbf{Q}(\mathbf{P})$ is $\mathbf{P}$'s orthogonal matrix, and ${\lambda_i}( \mathbf{P})$ is the $i$-th eigenvalue of matrix $\mathbf{P} \in \mathbb{R}^{n \times n}$. 
\label{solutionsboundary}
\end{lemma}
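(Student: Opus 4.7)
The plan is to reduce the ellipsoidal constraint $\mathbf{s}^\top \mathbf{P} \mathbf{s} = \varphi$ to the unit sphere via two linear changes of coordinates, then invoke a standard hyperspherical parameterization of $S^{n-1}$. First I would apply the spectral theorem to $\mathbf{P}$: since $\mathbf{P} \succ 0$, it admits the eigendecomposition $\mathbf{P} = \mathbf{Q}(\mathbf{P}) \cdot \mathrm{diag}(\lambda_1(\mathbf{P}),\ldots,\lambda_n(\mathbf{P})) \cdot \mathbf{Q}(\mathbf{P})^\top$ with $\mathbf{Q}(\mathbf{P})$ orthogonal and every eigenvalue strictly positive. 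Substituting $\mathbf{s} = \mathbf{Q}(\mathbf{P})\cdot\mathbf{y}$ and using $\mathbf{Q}(\mathbf{P})^\top \mathbf{Q}(\mathbf{P}) = \mathbf{I}_n$ collapses the quadratic form into the diagonal constraint $\sum_{i=1}^n \lambda_i(\mathbf{P}) [\mathbf{y}]_i^2 = \varphi$. Because $\mathbf{Q}(\mathbf{P})$ is a bijection on $\mathbb{R}^n$, there is no loss of generality in solving for $\mathbf{y}$ and then mapping back through $\mathbf{Q}(\mathbf{P})$.

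Next I would rescale each coordinate by setting $[\mathbf{z}]_i \triangleq \sqrt{\lambda_i(\mathbf{P})/\varphi}\cdot[\mathbf{y}]_i$, which turns the diagonal ellipsoid into the unit sphere $\sum_i [\mathbf{z}]_i^2 = 1$. It is then enough to parameterize every point of $S^{n-1}$ and undo this rescaling to recover $[\mathbf{y}]_i = \sqrt{\varphi/\lambda_i(\mathbf{P})}\cdot[\mathbf{z}]_i$, which is exactly the prefactor appearing in \eqref{solutionsboundary2}. The content of the lemma is thus reduced to verifying that the angular expressions
\[
[\mathbf{z}]_1 = \sin\theta_1 \prod_{m=2}^{n-1}\sin\theta_m, \qquad [\mathbf{z}]_i = \cos\theta_{i-1}\prod_{m=i}^{n-1}\sin\theta_m \; (i \ge 2)
\]
(with the convention that an empty product is $1$, so $[\mathbf{z}]_n = \cos\theta_{n-1}$) trace out $S^{n-1}$ as the angles range over their natural domains.

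The cleanest way to verify this is by induction on $n$. For the base case $n=2$, the formula reduces to $([\mathbf{z}]_1,[\mathbf{z}]_2) = (\sin\theta_1,\cos\theta_1)$, giving the unit circle. For the inductive step, I would peel off the $\theta_{n-1}$ angle by writing $(\mathbf{z}_1,\ldots,\mathbf{z}_{n-1}) = \sin\theta_{n-1}\cdot (\mathbf{z}'_1,\ldots,\mathbf{z}'_{n-1})$ and $\mathbf{z}_n = \cos\theta_{n-1}$, where the primed vector is precisely the formula of the lemma in dimension $n-1$. By the inductive hypothesis the primed vector lies on $S^{n-2}$, so $\sum_{i=1}^{n-1}[\mathbf{z}]_i^2 = \sin^2\theta_{n-1}$, and adding $\cos^2\theta_{n-1}$ yields $1$. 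Surjectivity onto $S^{n-1}$ follows from the identification $S^{n-1}\cong [0,\pi]\times S^{n-2}$ given by $\mathbf{z}\mapsto(\theta_{n-1},\mathbf{z}')$.

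The main obstacle is essentially bookkeeping: the indexing in \eqref{solutionsboundary2} is slightly nonstandard (the cosine of $\theta_{i-1}$ appears in $[\mathbf{z}]_i$ rather than the more common convention with $\theta_i$), so care is needed when aligning the inductive step with the statement. Once the telescoping Pythagorean identity $\sin^2\theta_1 + \cos^2\theta_1 = 1$ is applied repeatedly and the indices are carefully tracked, the sphere identity falls out cleanly, and chaining the three reversible substitutions $\mathbf{s}\leftrightarrow\mathbf{y}\leftrightarrow\mathbf{z}\leftrightarrow(\theta_1,\ldots,\theta_{n-1})$ gives the claimed closed form.
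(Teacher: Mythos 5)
Your proposal is correct and follows essentially the same route as the paper's proof: diagonalize $\mathbf{P}$ via its orthogonal eigenbasis, substitute $\mathbf{y} = \mathbf{Q}^\top(\mathbf{P})\cdot\mathbf{s}$ to reduce the constraint to $\sum_i \frac{\lambda_i(\mathbf{P})}{\varphi}[\mathbf{y}]_i^2 = 1$, and verify the hyperspherical parameterization by induction on $n$ (base case $n=2$ via $\sin^2\theta_1+\cos^2\theta_1=1$, inductive step by peeling off $\theta_{n-1}$ and telescoping the Pythagorean identity). Your explicit rescaling to the unit sphere and the remark on surjectivity are minor refinements of the same argument, not a different approach.
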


\begin{proof}
See \cref{pfworst}.
\end{proof}

\textbf{Example in 3D case}: Consider a three-dimensional safety envelop, i.e., $\mathbf{s} \in \mathbb{R}^{3}$. In this example, according to \cref{pso3}, we have 
\begin{align}
\sum\limits_{i = 1}^3 {\frac{{{\lambda _i}(\mathbf{P})}}{\varphi}}  \cdot \mathbf{y}_i^2 = 1, ~~~~\mathbf{y} = \mathbf{Q}^{\top}(\mathbf{P}) \cdot \mathbf{s}
\label{eq: yi}
\end{align}
for which we define
\begin{align}
\overline{\mathbf{y}}_i \triangleq \sqrt{{\frac{{{\lambda _i}(\mathbf{P})}}{\varphi}}}  \cdot \mathbf{y}_i, 
\label{eq: defiY}
\end{align}
in light of \eqref{eq: defiY}, \cref{eq: yi} can be rewritten as 
\begin{align}
\overline{\mathbf{y}}_{1}^2 + \overline{\mathbf{y}}_{2}^2 + \overline{\mathbf{y}}_{3}^2 = 1, \label{eq: defiY111}
\end{align}
which describes a sphere in $\mathbb{R}^{3}$ space, as shown in~\cref{fig: visualization} (a). From~\cref{fig: visualization} (b), we notice that every point on the sphere can be parameterized using angles $\theta_1$ and $\theta_2$ as: 
\begin{align}
 \overline{\mathbf{y}}_{i} \triangleq \begin{cases}
	{\sin} ({{\theta_2}}) \cdot {\sin} ({{\theta_1}}), &i = 1 \\
        {\sin} ({{\theta_2}}) \cdot {\cos} ({{\theta_1}}) , &i = 2 \\
        {\cos} ({{\theta_2}}), &i = 3.
	\end{cases}
\label{eq: defiYoo}
\end{align}

By selecting different values for $\theta_1$ and $\theta_2$, we can sample any point $[\overline{\mathbf{y}}_{1}, \overline{\mathbf{y}}_{2}, \overline{\mathbf{y}}_{3}]^\top$ on the sphere and, consequently, obtain the conditions on the boundary of the safety envelop (indicated by \cref{eq: yi} and \cref{eq: defiY})

\begin{figure}
    \begin{minipage}{0.46\textwidth}
      \begin{algorithm}[H]
        \caption{\normalsize{Periodic and Worst-case Sampling for Phy-DRL Training}}
        \begin{algorithmic}[1]
        \State \textbf{Input:} System-state dimension $n \in \mathbb{N}$; sample numbers $\bar{q}_{r} \in \mathbb{N}$, $r = 1, \ldots, n-1$; matrix $\mathbf{P}$; parameter $\varphi = 1$, Period number $p \in \mathbb{N}$.
        \State Initialize boundary set: $\mathbb{B}^{\mathbf{P}}_\varphi$ $\leftarrow \emptyset$;

      \For{$\theta_1 = 0 : \frac{2\pi}{\bar{q}_{1}} : (2\pi-\frac{2\pi}{\bar{q}_{1}})$} \Comment{Generating worst-case conditions} \label{start_gen}
            \State Set: $\theta_{2} \leftarrow 0, \theta_{3} \leftarrow 0, \dots, \theta_{n-1} \leftarrow 0$; \label{assignzero}
            \State Generate $\mathbf{s} \in \mathbb{R}^{n}$ by  \cref{solutionsboundary2};
              \State Update set: $\mathbb{B}^{\mathbf{P}}_\varphi \leftarrow \mathbb{B}^{\mathbf{P}}_\varphi \cup \left\{ \mathbf{s} \right\}$;
               \For{$\theta_{2} = \frac{2\pi}{\bar{q}_{2}} : \frac{2\pi}{\bar{q}_{2}} : (2\pi-\frac{2\pi}{\bar{q}_{2}})$} \label{for7}
                \State $\textbf{\vdots}$  \vspace{0.20cm}   
                \For{$\theta_{n-1} = \frac{2\pi}{\bar{q}_{n-1}} : \frac{2\pi}{\bar{q}_{n-1}} : (2\pi-\frac{2\pi}{\bar{q}_{n-1}})$} \label{forn-1}
                \State Generate $\mathbf{s} \in \mathbb{R}^{n}$ by  \cref{solutionsboundary2};
               \State Update set: $\mathbb{B}^{\mathbf{P}}_\varphi \leftarrow \mathbb{B}^{\mathbf{P}}_\varphi \cup \left\{ \mathbf{s} \right\}$;
       \EndFor 
       \State  $\textbf{\vdots}$  
       \EndFor
       \EndFor \label{end_gen}
        \For{$j=1$ to $p$} \Comment{Start training curriculum} \label{start_cur}
        \For{$s\in \mathbb{B}^{\mathbf{P}}_\varphi$}
         \State Set $\mathbf{s}(1) \leftarrow \mathbf{s}$ for system in \cref{realsys};
          \State Train (test) Phy-DRL agent for one episode;
           \EndFor
           \EndFor \label{end_cur}
        \end{algorithmic}\label{ALG1}
      \end{algorithm}
    \end{minipage}
  \end{figure}

We now are ready to propose an algorithm to automatically generate sampling conditions located at the safety envelope's boundary. Moreover, we design a training curriculum to periodically visit the worst-case conditions for policy learning. 

As shown in \cref{ALG1}, the proposed algorithm includes worst-case condition generation \cref{start_gen}--\cref{end_gen}, and training curriculum \cref{start_cur} -- \cref{end_cur}. In worst-case condition generation, we sample $\theta_1, \theta_2, \ldots, \theta_{n-1}$ sparsely in the interval $[0,~2\pi)$. This is motivated by the solutions of worst-case samples in \cref{solutionsboundary2}. The number of samples $n$ is a user-defined parameter to determine the sparsity of the samples. As shown in \cref{exp}, we found that training on a few worst-case conditions can already learn a safe policy that renders the safety envelope invariant. In the training curriculum, we iteratively set the system's state at the generated worst-case conditions. This can help the agent to sufficiently visit the challenging conditions for learning a safe and robust policy.

\textbf{Episode Complexity.}
\cref{start_gen}, \cref{for7} and \cref{forn-1} indicate the sampling number of each radian $\theta_{r}$ is $q_{r} -1$, where $r = 2, \ldots, n-1$, while the number of $\theta_{1}$ is $q_{n}$. This setting is because of the observation from \cref{solutionsboundary2} that if  $\theta_{i}$ = 0, the values of radians $\theta_{1}, \ldots, \theta_{i-1}$ have no influence on solution $\mathbf{y}$. For example, if we let $\theta_{n-1} = 0$, even without knowing $\theta_{1}, \ldots, \theta_{n-2}$, the ${\mathbf{y}}$ can be directly obtained as $\mathbf{y} = \sqrt{\frac{\varphi}{{{\lambda _1}( \mathbf{P})}}}[0, 0, \ldots, 0, 1]^{\top}$. This observation motivates \cref{assignzero} of \cref{ALG1}. The total number of episodes (or sampled worst-case samples) is thus 
\begin{align}
\text{number of episodes} = {{q_1} \cdot p \cdot \prod\limits_{i = 2}^{n - 1} {\left( {{q_i} - 1} \right)}  + {q_1} \cdot p }. \label{numeps} 
\end{align}

\begin{figure*}[h]
    \centering
    \subfloat[worst-case]{\includegraphics[width=0.25\textwidth]{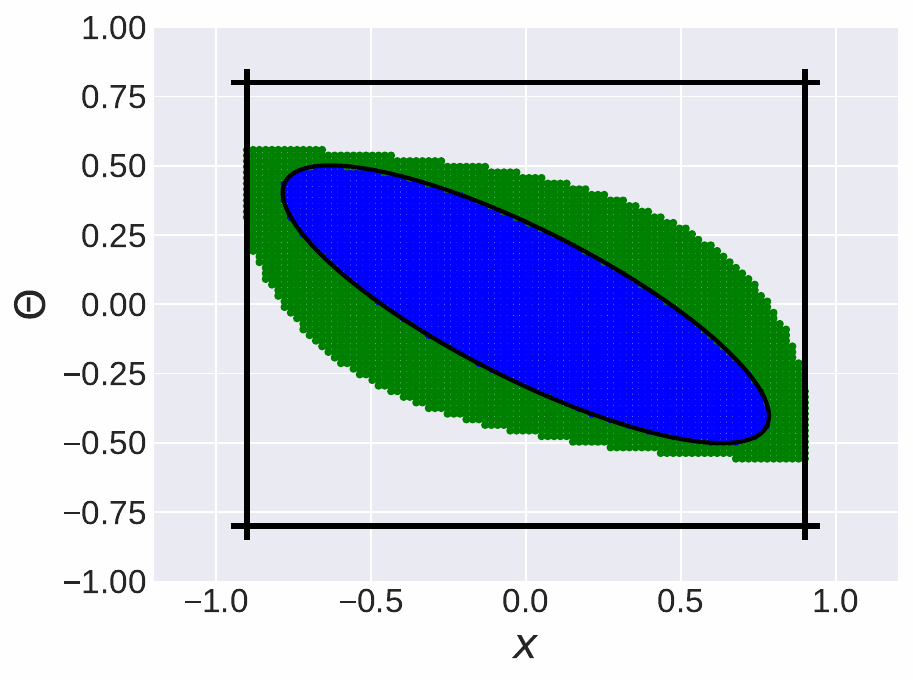}} 
    \centering
    \subfloat[random]{\includegraphics[width=0.25\textwidth]{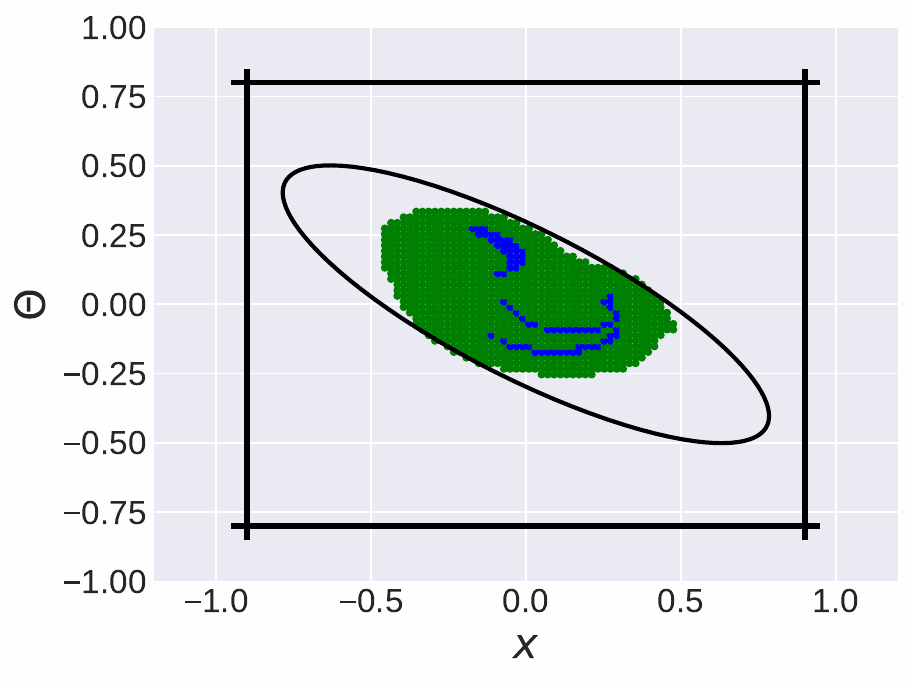}} 
    \centering
    \subfloat[worst-case]{\includegraphics[width=0.25\textwidth]{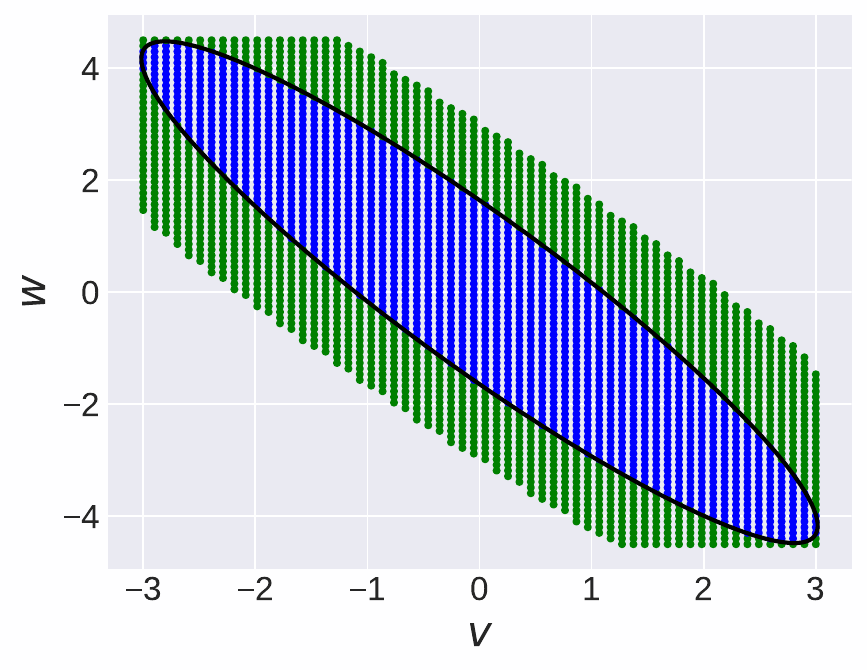}} 
    \centering
    \subfloat[random]{\includegraphics[width=0.25\textwidth]{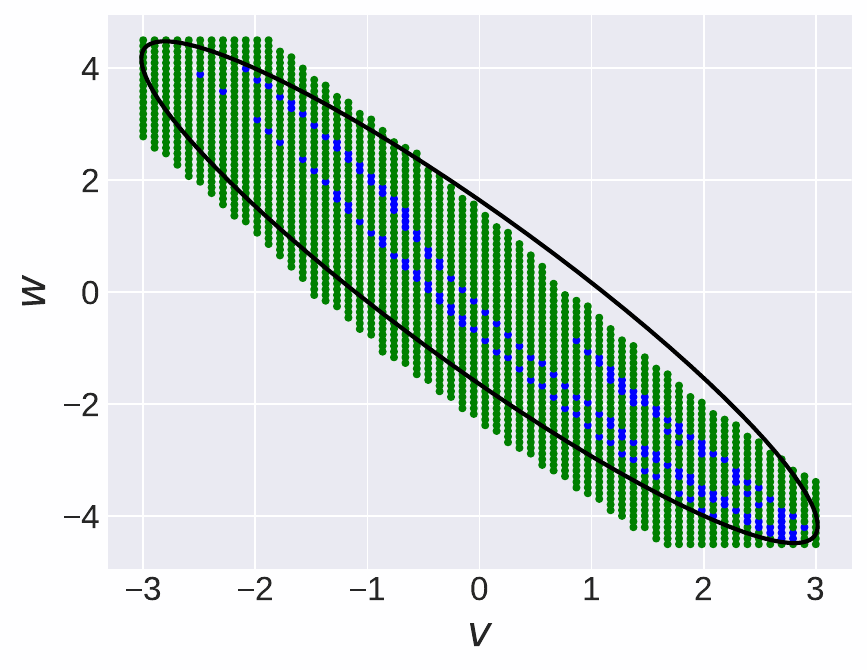}} 
\caption{Worst-case Sampling \textit{v.s.} Random Sampling, with termination condition. Blue: area of IE samples \eqref{ies}. Green: area of EE samples \eqref{ees}. Ellipse area: safety envelope. The (a) and (b) are the testing result visualized on $x$ and $\theta$ dimensions, where (c) and (d) are the results visualized on $v$ and $w$ dimensions. The size of colored area indicates the safety and robustness of the learned policy, the larger the better.}
\label{safesample_x_theta_w}
\end{figure*}

\begin{figure*} [http]
    \centering
    \subfloat[worst-case-w.t.]{\includegraphics[width=0.25\textwidth]{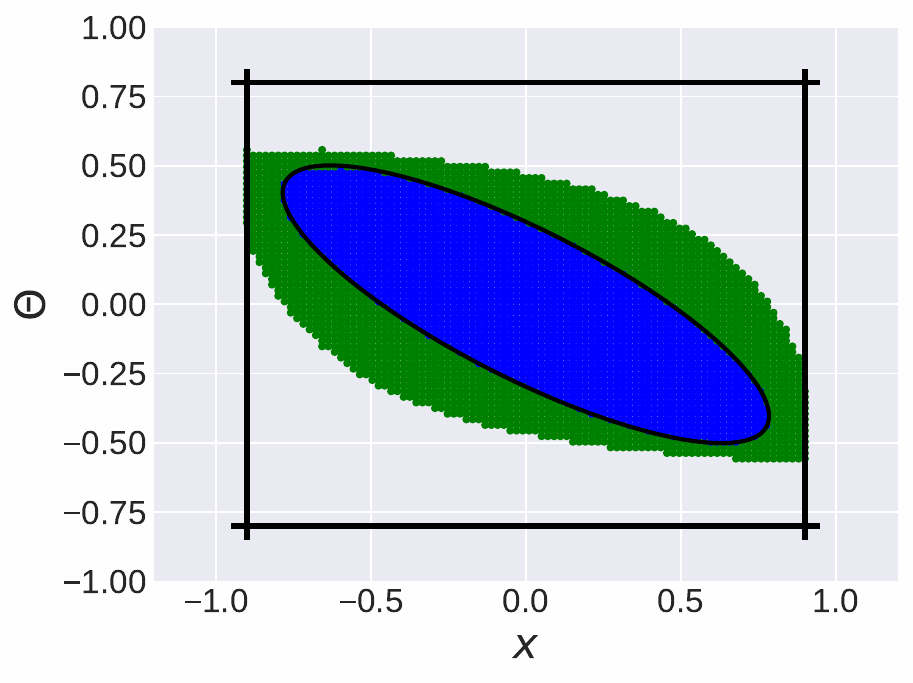}} 
    \centering
    \subfloat[random-w.t.]{\includegraphics[width=0.25\textwidth]{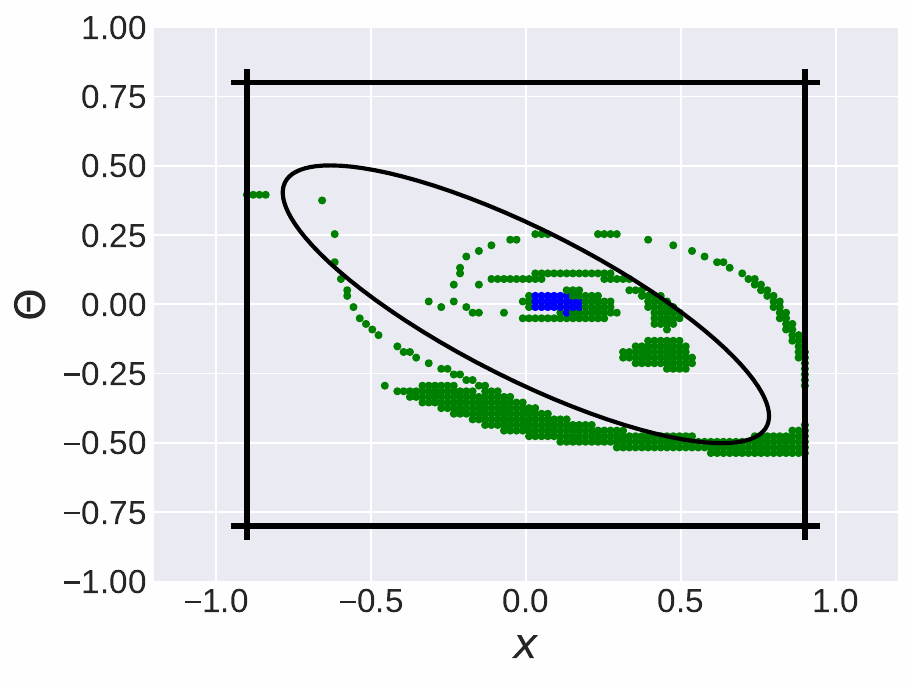}} 
    \centering
    \subfloat[worst-case-w.t.]{\includegraphics[width=0.25\textwidth]{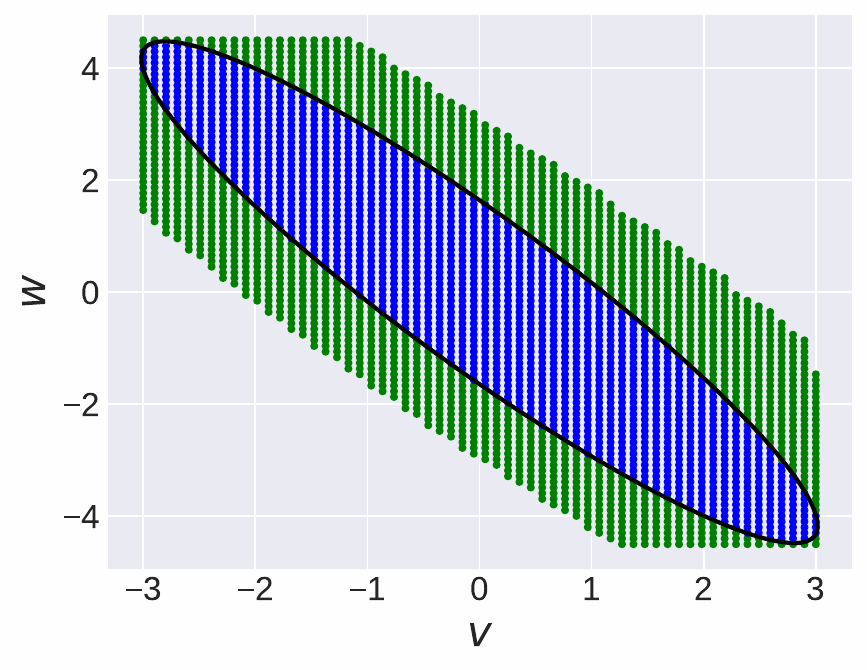}} 
    \centering
    \subfloat[random-w.t.]{\includegraphics[width=0.25\textwidth]{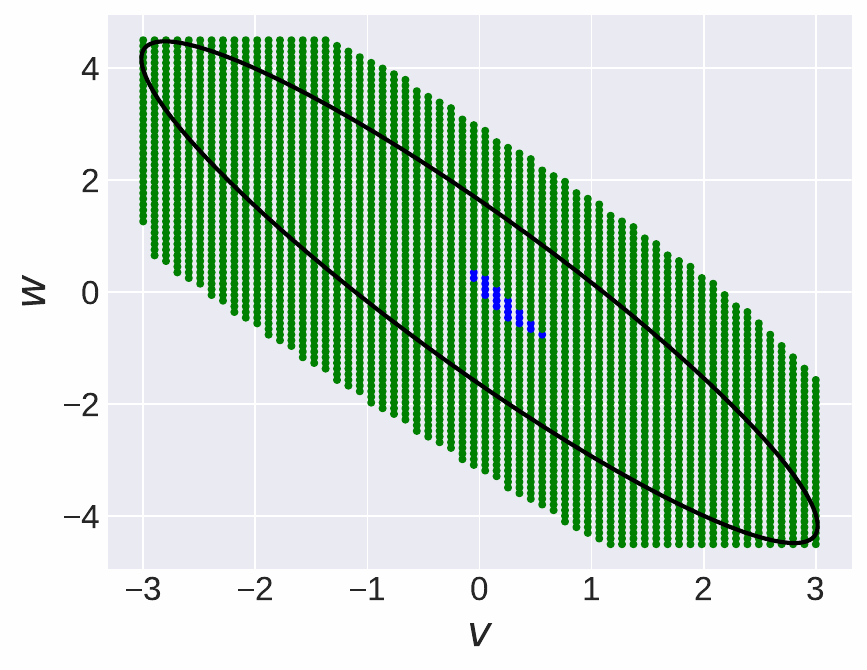}} 
\caption{Worst-case Sampling \textit{v.s.} Random Sampling, without using termination condition in training. }
\label{safesample_x_theta_wo}
\end{figure*}

\begin{figure*}[http]
    \centering
    \subfloat[$\text{Phy-DRL}_{\text{wc}}$]{\includegraphics[width=0.25\textwidth]{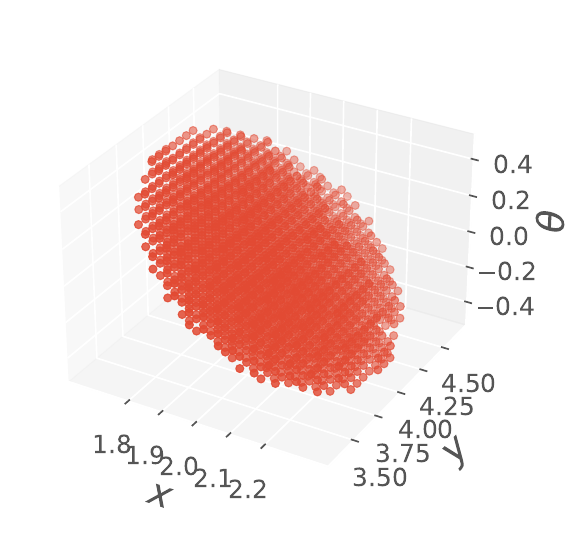}} 
    \centering
    \subfloat[$\text{Phy-DRL}_{\text{ran}}$]{\includegraphics[width=0.25\textwidth]{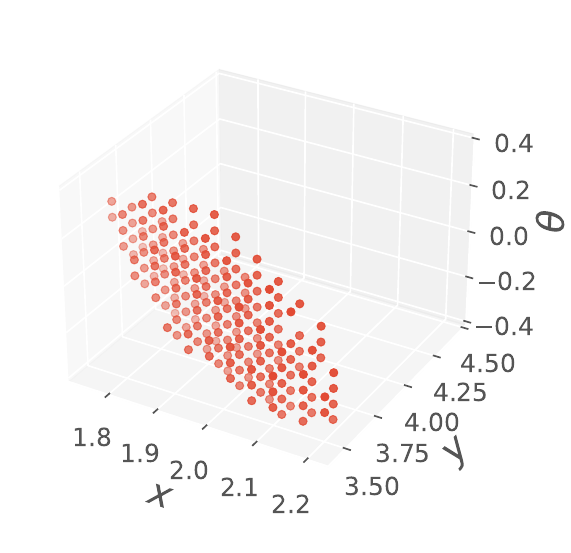}} 
    \centering
    \subfloat[$\text{DRL}_{\text{CLF-wc}}$]{\includegraphics[width=0.25\textwidth]{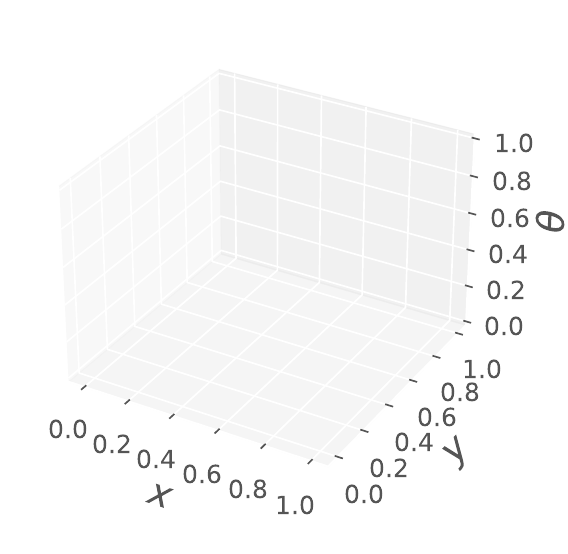}} 
    \centering
    \subfloat[$\text{average reward}$]{\includegraphics[width=0.25\textwidth]{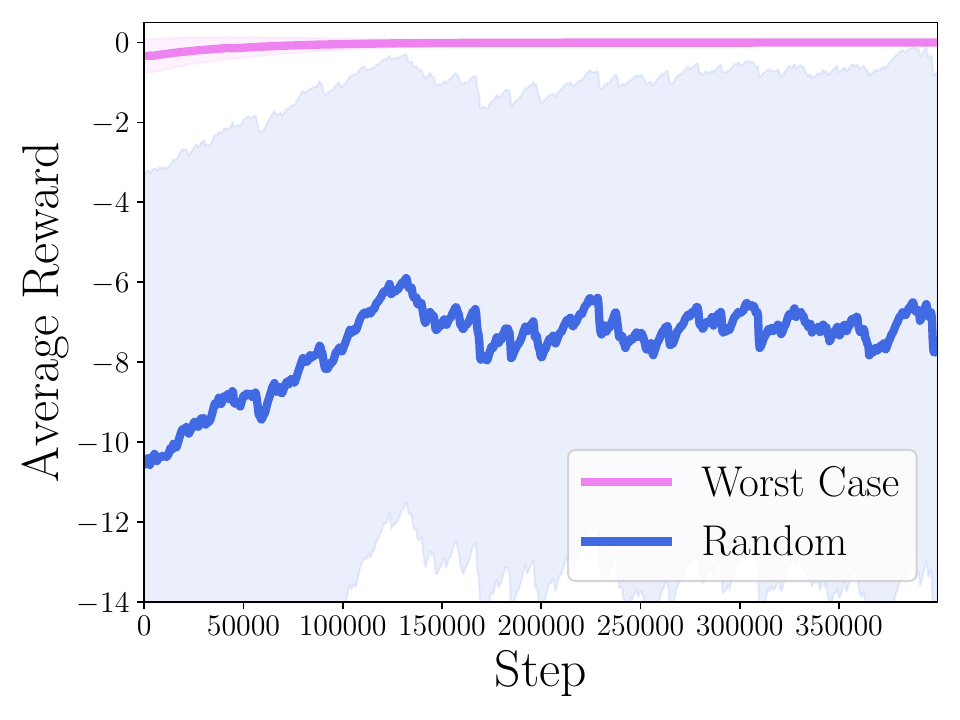}} 
\caption{(a)-(c): The number and locations of IE samples \eqref{ies} visualized in $x-y-\theta$ space. $\text{Phy-DRL}_{\text{wc}}$ has much more colored points, meaning that it can almost render the safety envelope invariant. (d): Reward curves (five random seeds): $\text{Phy-DRL}_{\text{wc}}$  v.s. $\text{Phy-DRL}_{\text{ran}}$.}
\label{quaapcv}
\end{figure*}

\begin{figure*}[h]
    \centering
    \subfloat[snow road: $r_{\text{x}}= 1$ m/s]{\includegraphics[width=0.245\textwidth]{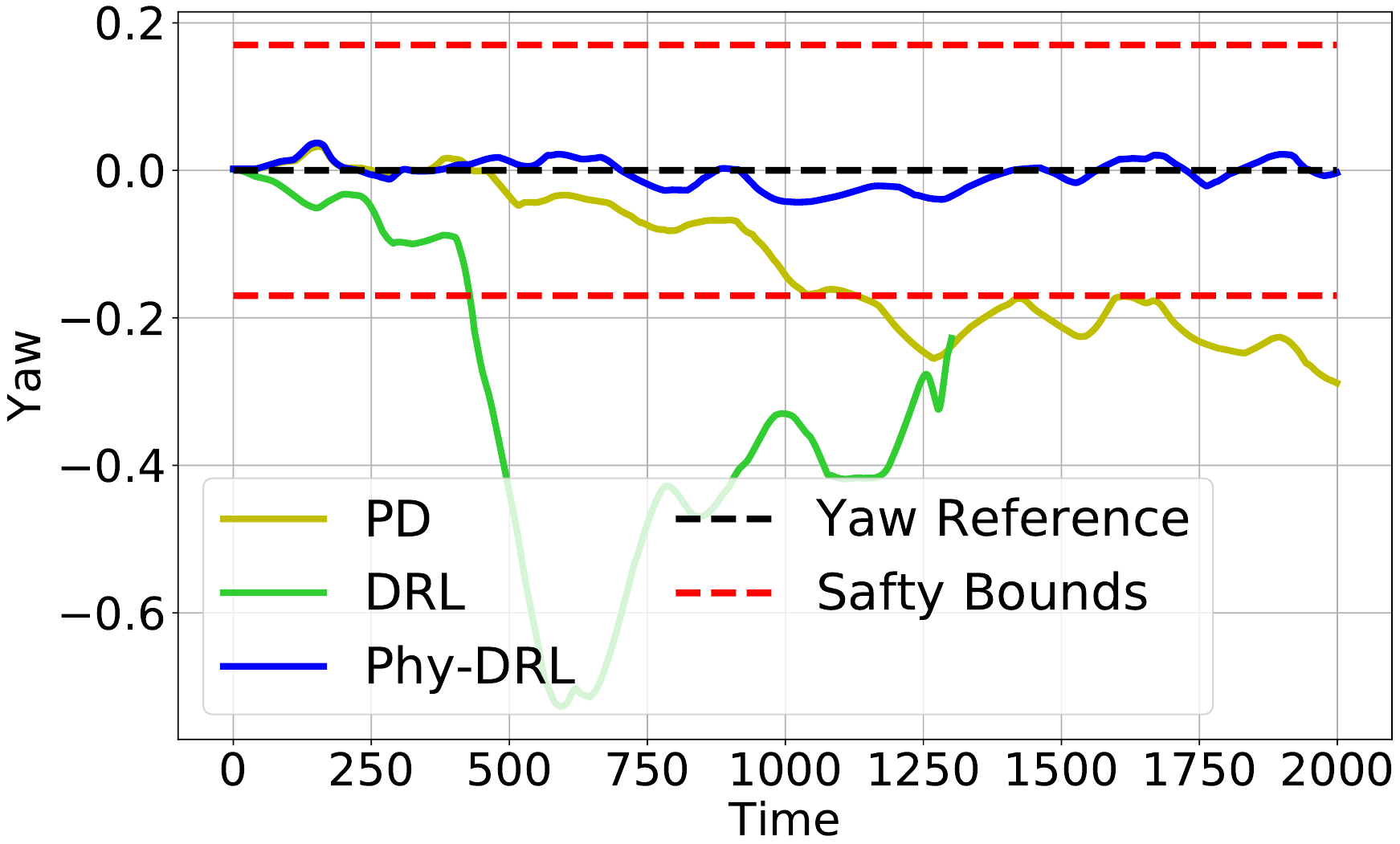}} 
    \centering
    \subfloat[snow road: $r_{\text{x}}= 1$ m/s]{\includegraphics[width=0.245\textwidth]{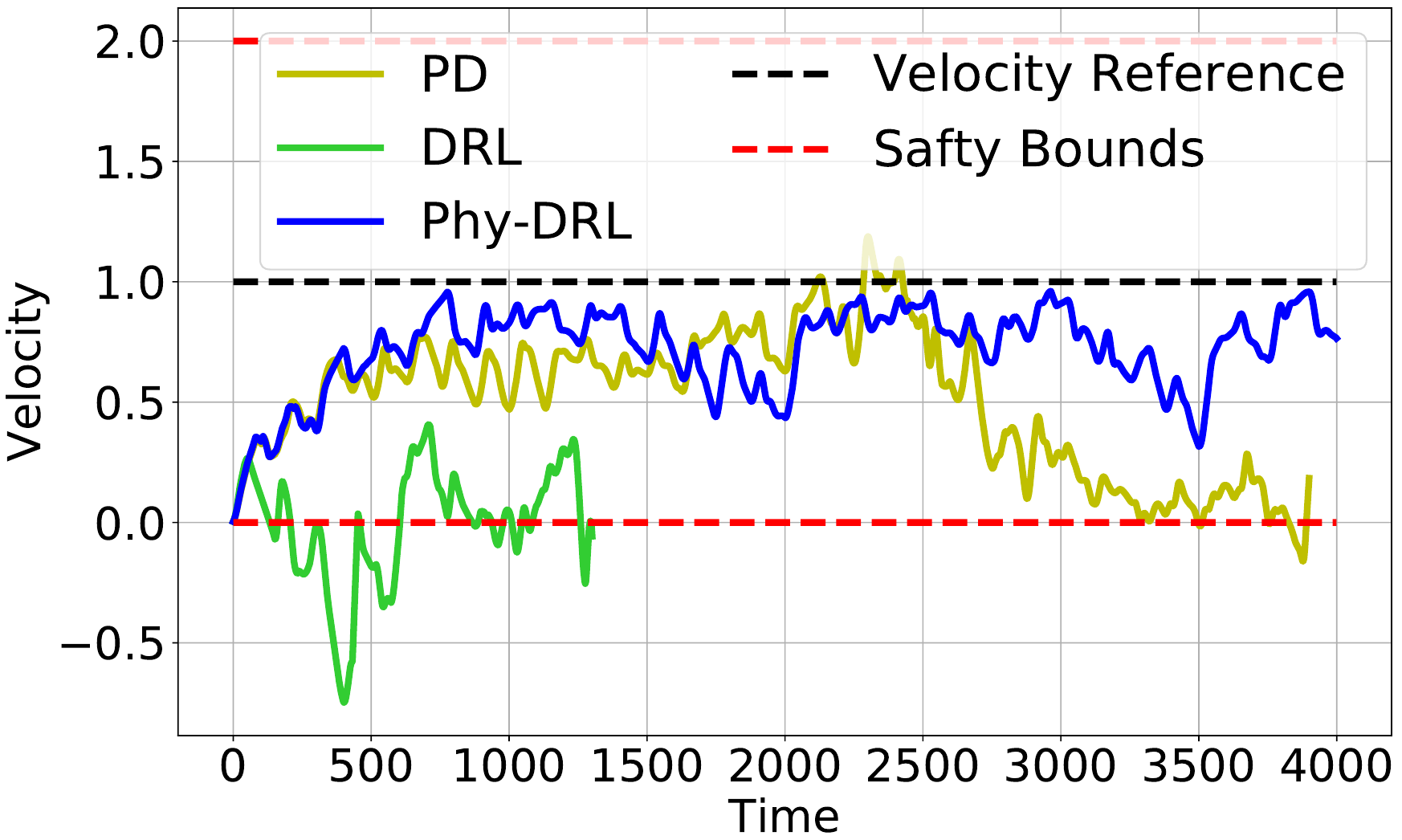}} 
    \centering
        \subfloat[wet road: $r_{\text{x}}=  -1$ m/s]{\includegraphics[width=0.245\textwidth]{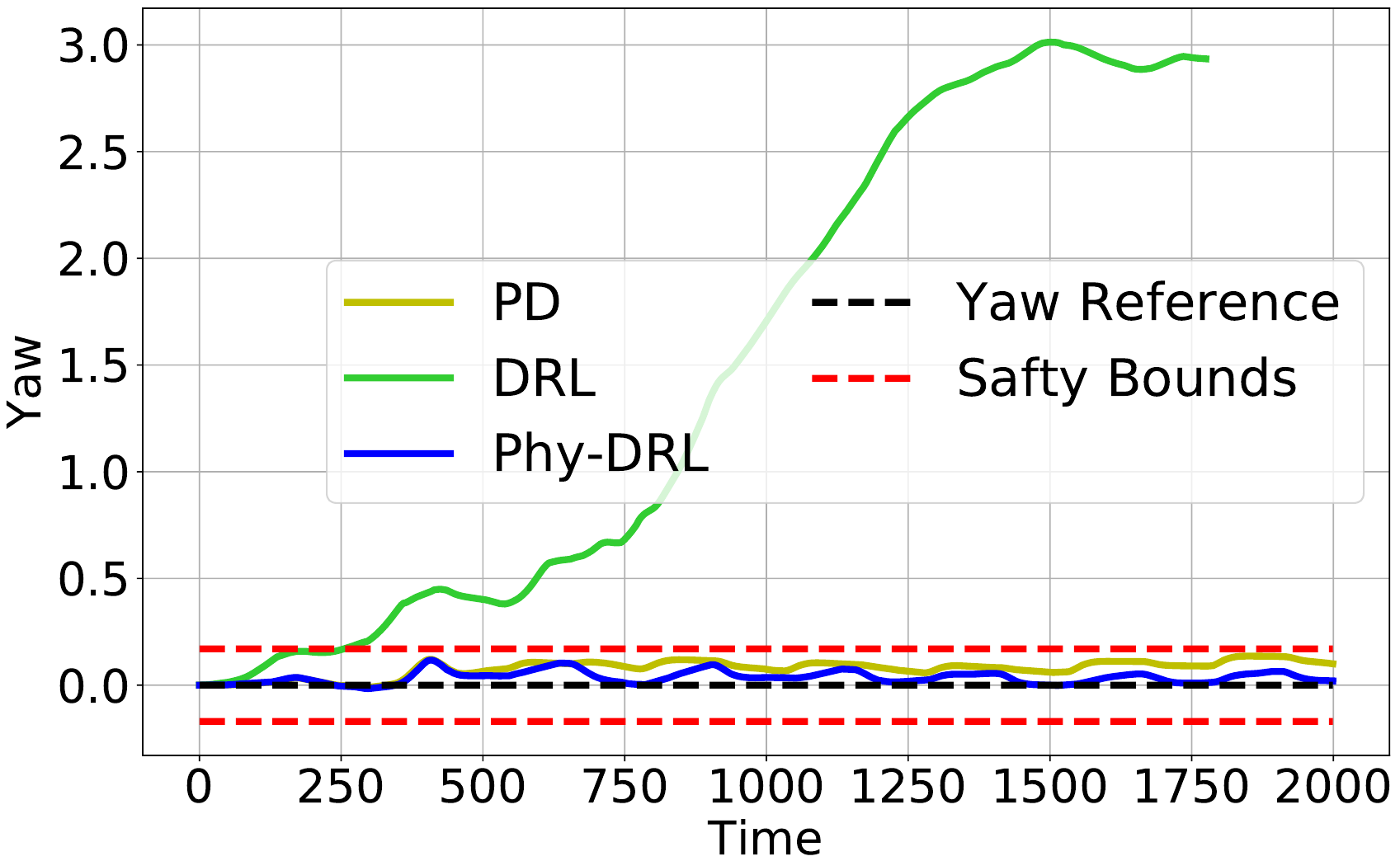}} 
    \centering
    \subfloat[wet road: $r_{\text{x}}=  -1$ m/s]{\includegraphics[width=0.245\textwidth]{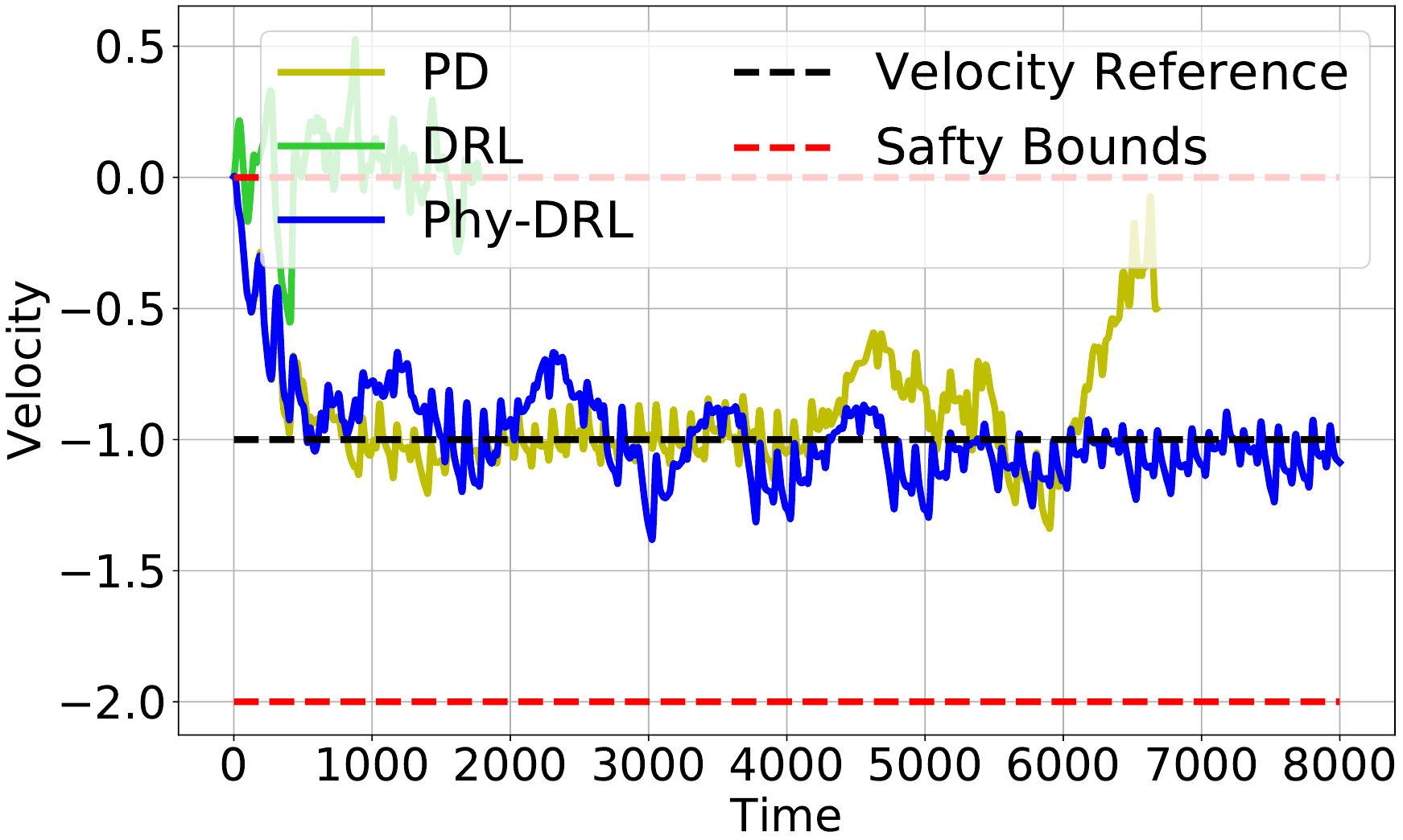}}
\caption{Yaw and velocity trajectories under velocity commands $r_{\text{x}} = -1 ~\textit{or}~ 1$ m/s on snow and wet road.}
\label{rfret}
\end{figure*}

\begin{figure*}[h]
    \centering
    \subfloat[Snow Road: $r_{\text{x}}=  0.5$ m/s]{\includegraphics[width=0.245\textwidth]{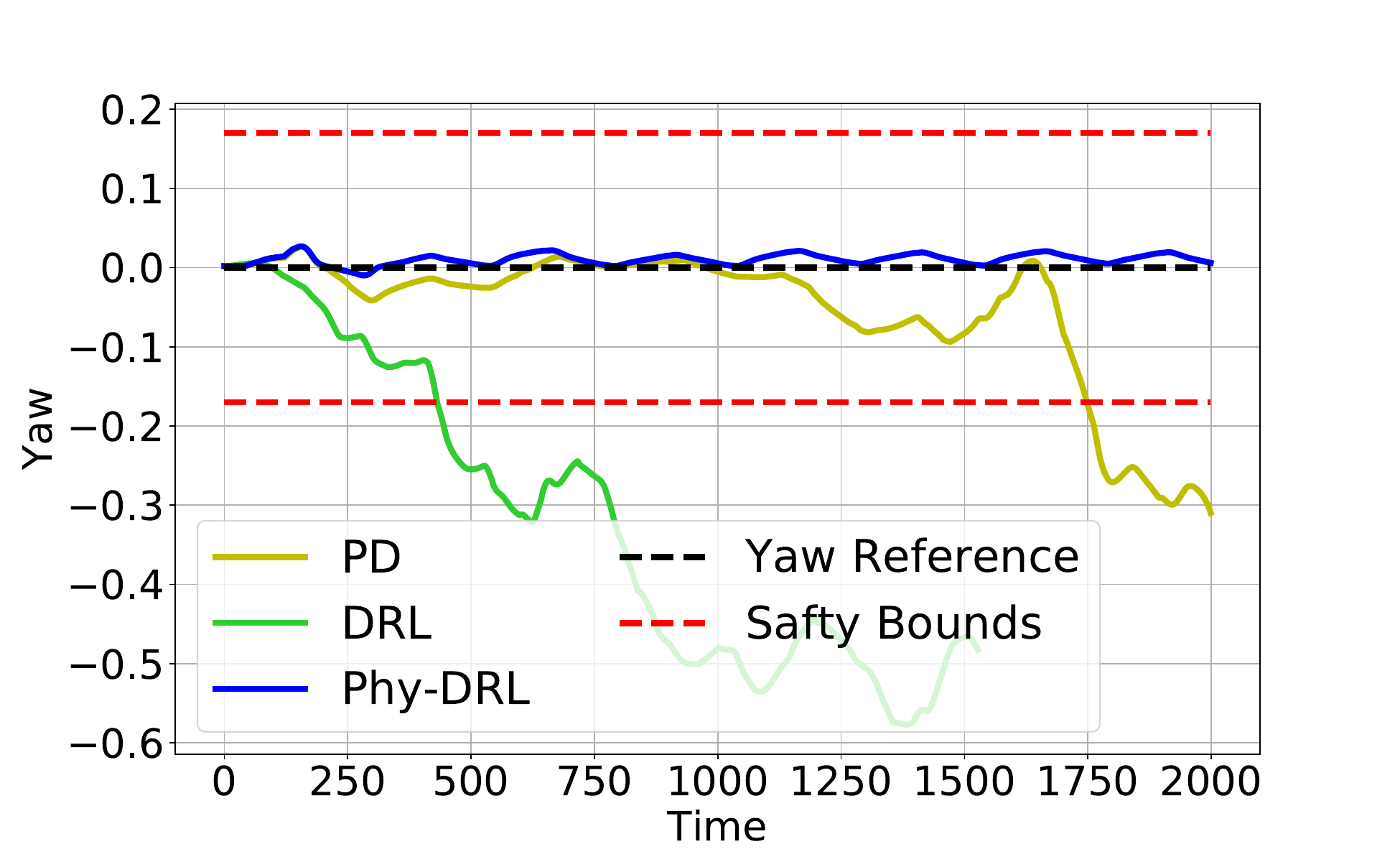}} 
    \centering
    \subfloat[Snow Road: $r_{\text{x}}=  0.5$ m/s]{\includegraphics[width=0.245\textwidth]{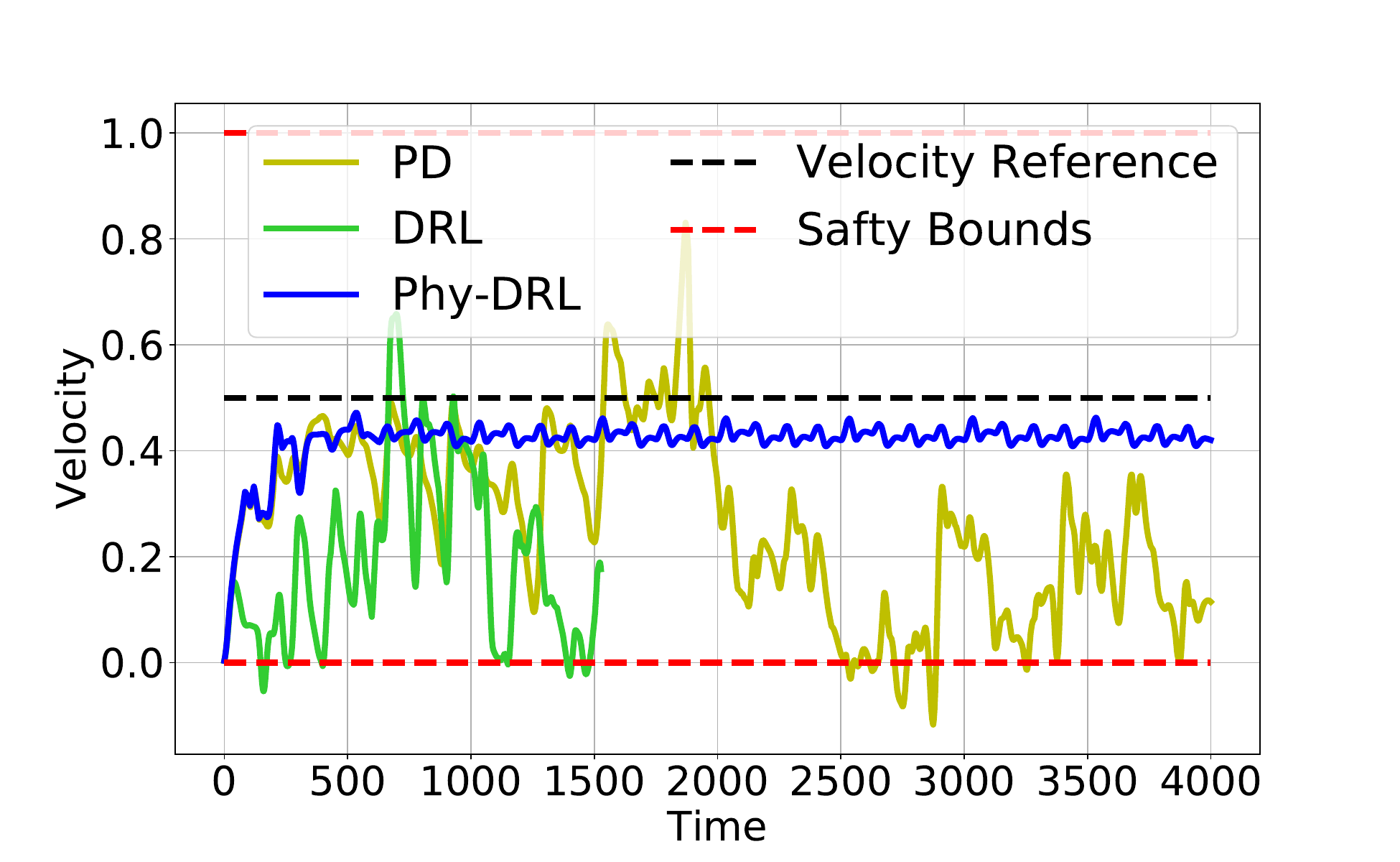}} 
        \centering
        \subfloat[Wet Road: $r_{\text{x}}=  -0.4$ m/s]{\includegraphics[width=0.245\textwidth]{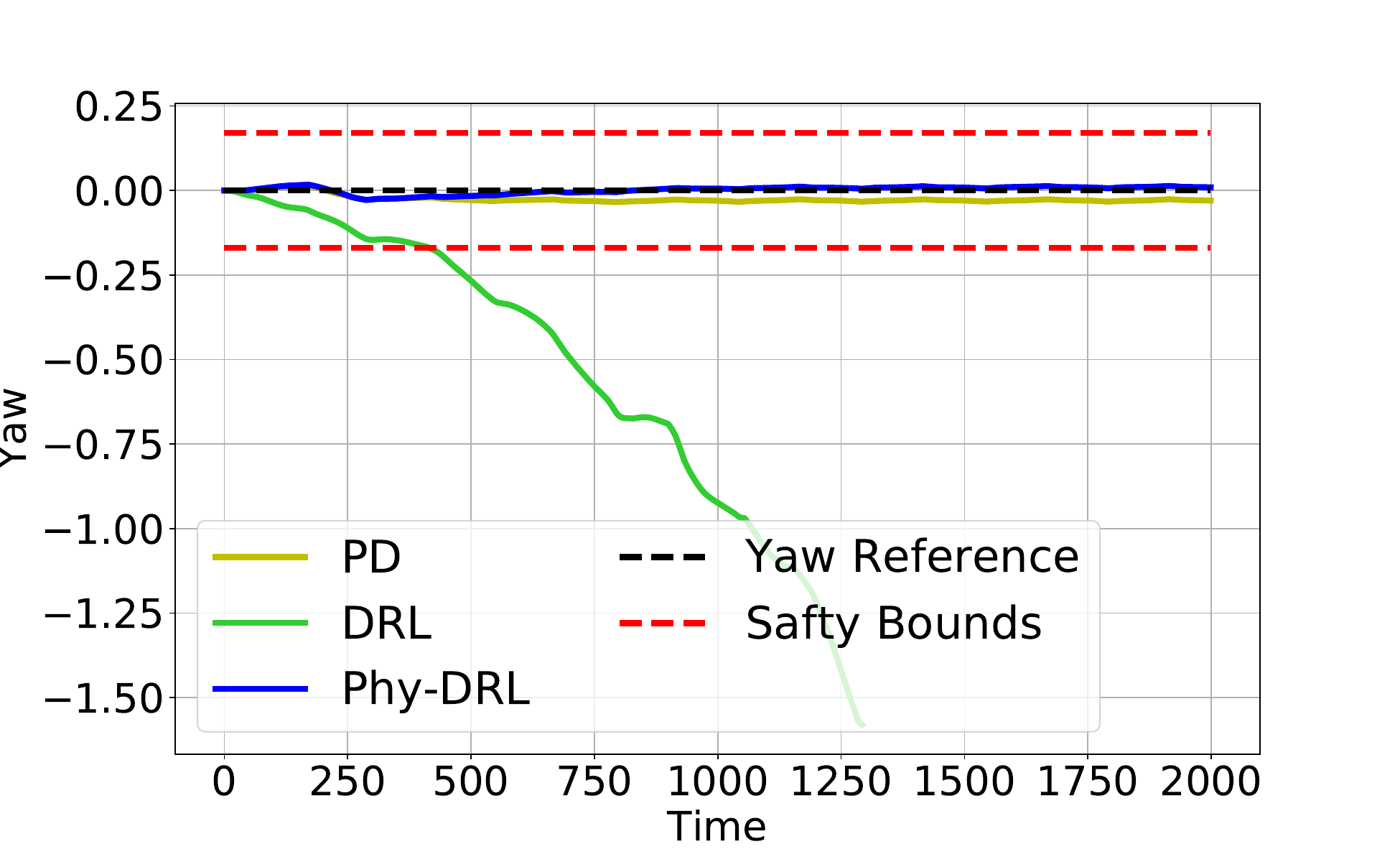}} 
    \centering
    \subfloat[Wet Road: $r_{\text{x}}=  -0.4$ m/s]{\includegraphics[width=0.245\textwidth]{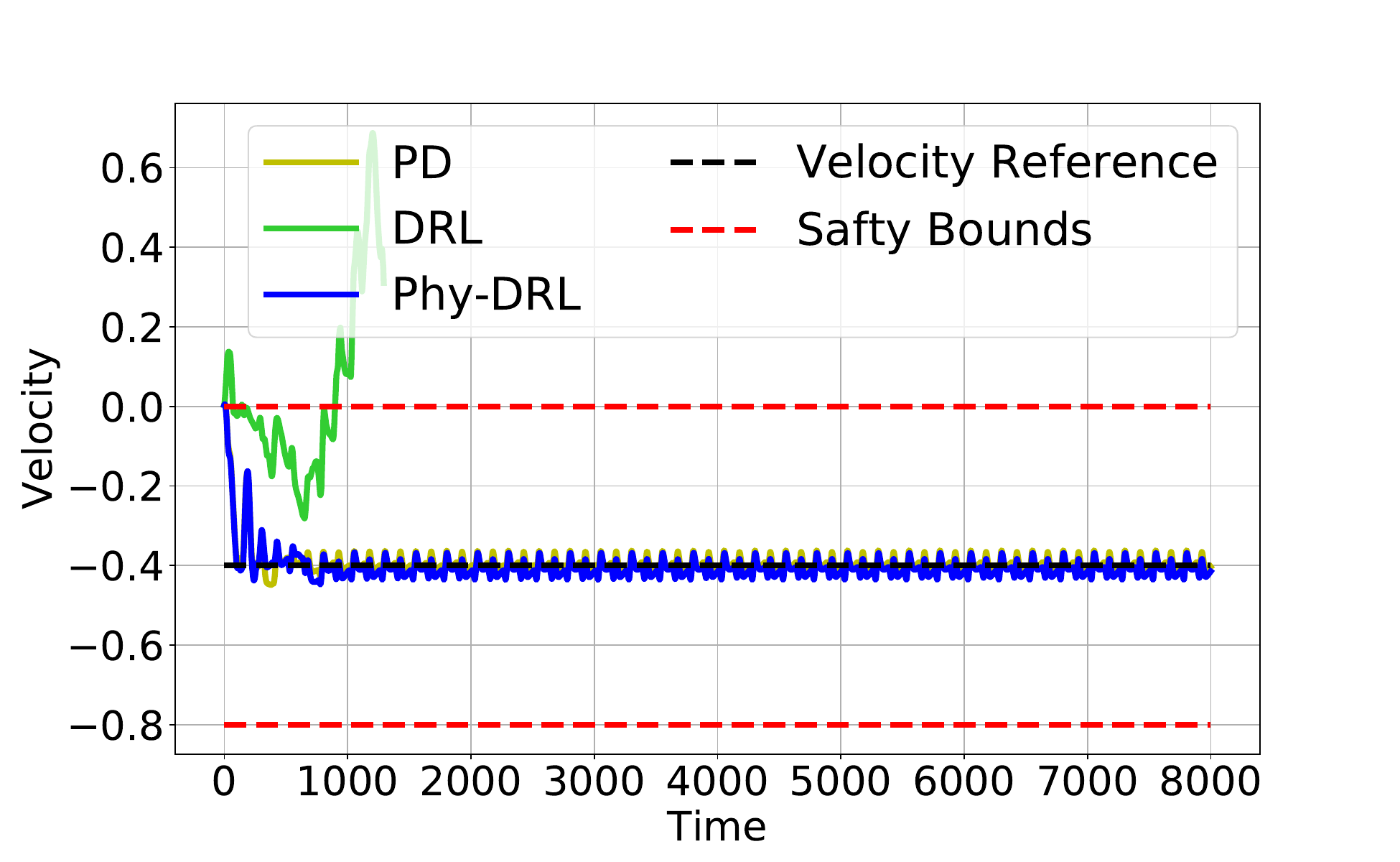}} 
\caption{Yaw and velocity trajectories under velocity commands $r_{\text{x}} = 0.5 ~\textit{or}~ -0.4$ m/s on snow and wet road.}
\label{rfretcc}
\end{figure*}

\section{Experiments}
\label{exp}
In this section, we evaluate the effectiveness of the proposed training algorithm using worst-case conditions and uniform sampled conditions commonly used in literature \cite{muratore2022robot, liu2022goal, kirk2023survey}. We evaluate these two condition sampling strategy on simulated cart-pole system, a 2D quadrotor, and a quadruped robot, and we cite the following safety samples from \cite{Phydrl1} for defining safety metrics. 
\begin{align}
&\textbf{Internal-Envelope (IE) sample $\widetilde{\mathbf{s}}$:} \\ \nonumber
& ~\text{if}~\mathbf{s}(1) =  \widetilde{\mathbf{s}} \in {\Omega}, ~\text{then}~\mathbf{s}(k) \in {\Omega}, \forall k \in \mathbb{N}. \label{ies} \\
&\textbf{External-Envelope (EE) sample $\widetilde{\mathbf{s}}$:} \\ \nonumber
&~\text{if}~\mathbf{s}(1) =  \widetilde{\mathbf{s}} \in {\mathbb{X}}, ~\text{then}~\mathbf{s}(k) \in \mathbb{X} \setminus \Omega, \exists  k \in \mathbb{N}.\label{ees}
\end{align}

Intuitively, \textbf{IE} samples means that the system starts from the safety envelope and it always stay in the safety envelope. \textbf{EE} means that the system starts from the safety set but not in the safety envelope and always stays in the safety set. 

\subsection{DRL Policy Setup}
We implement our policy using Phy-DRL framework \cite{Phydrl1}, where the $a_{drl}$ is implemented based on DDPG algorithm \cite{lillicrap2015continuous}. The action value function and actor network are both parameterized using a multi-layer-perceptions (MLP) model. The reward function is designed as the format of \cref{reward}, where the matrix $\mathbf{P} $ and $\mathbf{H}$ are obtained as in \cite{Phydrl2} by solving LMI problems for each robot using their own linear dynamics models.


\begin{table}[http]
\vspace{5pt}
\caption{Training episodes and failed episodes for different period of training and numbers of sampled conditions. The failure episodes means the episode in which the system violates the safety constraints.}
\aboverulesep=0ex 
\belowrulesep=0ex 
\center
\begin{tabular}{c|c|c|c}
\toprule[1.5pt]
\rule{0pt}{1.1EM}%
\textbf{Sampling IDs} & \textbf{EPs Num} & \textbf{Failed Eps} & \textbf{Failure rate} \\
\midrule
worst-case (2-3) & 30 & 3 & 10.0\%\\
random (2-3) & 30 & 29 & 96.67\%\\
worst-case-w.t.(2-3)& 30 & 9 & 29.0\%\\
random-w.t. (2-3) & 30 & 15 & 50.00\%\\
\midrule
worst-case (2-4) & 80 & 3 & 3.75\%\\
random (2-4) & 80 & 74 & 92.5\%\\
worst-case-w.t.(2-4)& 80 & 0 & 0.\%\\
random-w.t. (2-4) & 80 & 46 & 57.50\%\\
\midrule
worst-case (2-5) & 170 & 1 & 0.59\%\\
random (2-5)  & 170 & 154 & 90.6\%\\
worst-case-w.t. (2-5) & 170 & 0 & 0.\%\\
random-w.t.(2-5) & 170 & 85 & 50.00\%\\
\bottomrule[1.5pt]
\end{tabular}
\label{table:failure}
\end{table}

\subsection{Cart pole}
\label{carpole}
In the cart pole case study, the objective is to learn a safe policy that stabilizes the pole from as many initial conditions as possible without violating safety constraints. For \cref{ALG1}, we let $p = 2$ and $q_{1} = q_{2} = q_{3} = 5$, which leads to in total 170 episodes, calculated using \eqref{numeps}. For the uniform sampling scheme, we let the initial position, velocity, angle, and angular velocity be uniformly sampled over the intervals $[-0.9, 0.9]$, $[-3.0, 3.0]$, $[-0.8, 0.8]$, and $[-4.5, 4.5]$, respectively. The bounds of intervals are the same as those of the safety envelope used for worst-case conditions generations. For training, the maximum length of one episode is 500 steps. Additionally, we introduce a terminal condition to the training episode that stops the system from running when a violation of safety occurs in training: $\widehat{\gamma} = 1$ if $|x(k)| \ge 0.9 $  or  $|\theta(k)| \ge 0.8$, and 0, otherwise. Summarily, we reset episodes if the maximum step of system running is reached, or $\widehat{\gamma}=1$.

We consider four sampling methods for Phy-DRL training. They are named 1) `worst-case' (i.e., \cref{ALG1} with termination condition),  2) `random' (i.e., random sampling with termination condition), 3) `worst-case-w.t.' (i.e., \cref{ALG1} without termination condition), and 4) `random-w.t.' (i.e., random sampling without termination condition).

We use the areas of IE sample \eqref{ies} and EE sample \eqref{ees} as the safety metrics. During testing, we intentionally introduce random friction force to the system to increase the testing variety, therefore to showcase the robustness of the learned policy. The sample areas of policies trained by the worst-case sampling and random sampling schemes with the episode termination condition are shown in \cref{safesample_x_theta_w}. The sample areas of policies trained without the episode termination condition are shown in \cref{safesample_x_theta_wo}. For each training episode, if the system violates the safety constraint \eqref{aset2}, this episode is marked as a failed episode. \cref{table:failure} summarizes the number of failed episodes and the failure rate of the four sampling schemes. Observing \cref{safesample_x_theta_w}, \cref{safesample_x_theta_wo}, and \cref{table:failure}, we discover:  
\begin{itemize}
\item The worst-case sampling procedure (i.e., \cref{ALG1}) empowers the Phy-DRL with fast and efficient training towards a safety guarantee for both \textbf{with} and \textbf{without} the episode termination settings, i.e., 
    the learned policy successfully rendering the safety envelope invariant (see \cref{safesample_x_theta_w} (a) and (c), and \cref{safesample_x_theta_wo} (a) and (c)).
    \item Compared with random sampling, the worst-case sampling has much smaller failure rate of episodes (see last two columns in \cref{table:failure}). The root reason goes back to the solutions of worst-case samples in \cref{solutionsboundary}, which automatically avoids many samples that are physically infeasible to control. Meanwhile, the safety areas of action policy trained using random sampling are much smaller (see \cref{safesample_x_theta_w} (b) and (d), and \cref{safesample_x_theta_wo} (b) and (d)).
\end{itemize}

Moreover, we are interested in how many worst-cases and training period are needed for learning a robust safe policy. Therefore, we consider two additional settings: (2-3) denoting $p = 2$ and $q_{1} = q_{2} = q_{3} = 3$ and (2-4) denoting $p = 2$ and $q_{1} = q_{2} = q_{3} = 4$. All other settings are identical to the experiments in \cref{carpole}. For those two settings, the areas of IE and EE samples of action policies trained under the worst-case sampling and random sampling schemes are shown in \cref{addexx1exp1} and \cref{addexx1exp2}, respectively. Meanwhile, \cref{table:failure} summarizes the number of failed episodes and the failure rate of the four sampling schemes. It can be seen from \cref{table:failure}, the number of failed episodes for worst-case training is consistently less than the random condition training. As demonstrated in \cref{addexx1exp1}, \cref{addexx1exp2}, by training using few conditions, the Phy-DRL can already learn a safe policy. Another reason is that the build-in model based policy in Phy-DRL can efficiently guide the exploration of the policy. We also note from \cref{table:failure}, the number of failed episodes for worst-case condition is very small but not very consistent. We attribute this to the randomness of the exploratory action during policy learning.

\subsection{2D Quadrotor} \label{quadffer}
In this case study, we use the 2D quadrotor simulator provided in Safe-Control-Gym \cite{yuan2022safe} as an experimental system. It is characterized by $(x, z)$ -- the translation position of the CoM of the quadrotor in the $xz$-plane, $\theta$ -- the pitch angle, and their velocities $v_x = \dot{x}$, $v_z = \dot{z}$, and $v_{\theta} = \dot{\theta}$. The mission of the action policy is to stabilize the quadrotor at the waypoint ($r_{x}$, $r_{z}$, $r_{\theta}$) under safety constraints: 

\begin{align}
\!\!\!\!\left| x\!-\!r_{x} \right| \!\le\! 0.5~\text{m}, ~\left| {z \!-\! r_{\text{z}}} \right| \!\le\! 0.8~\text{m}, ~ \left| {\theta \!-\! r_{\theta}} \right| \!\le\! 0.8~\text{rad}.
\end{align}

In this experiment, we set the training episode as 400 with the same episode length as 500. For comparison, we train three policies: $\text{Phy-DRL}_{\text{wc}}$, $\text{Phy-DRL}_{\text{ran}}$, and $\text{DRL}_{\text{CLF-wc}}$, denoting trained Phy-DRL polices using worst-case sampling, random sampling, and pure data-driven DRL policy trained using worst-case sampling and CLF-like reward (proposed in \cite{westenbroek2022lyapunov}), respectively. For the random sampling, we let the initial $x, z, \theta, v_x, v_z, v_{\theta}$ be uniformly sampled over the intervals $[1.5, 2.5]$, $[3.2, 4.8]$, $[-0.8, 0.8]$, $[-1, 1]$, $[-10, 10]$, and $[-45, 45]$, respectively. The intervals' bounds are the same as those of the safety envelope used for worst-case sampling. 

For testing, we set initial velocities as zeros. The IE samples \eqref{ies} of considered three policies are shown in \cref{quaapcv} (a)-(c). It can be seen from \cref{quaapcv}, that Phy-DRL -- powered by worst-case sampling -- features much more stable and fast training towards safety guarantee. Meanwhile, the reward's training curves (five random seeds) of $\text{Phy-DRL}_{\text{wc}}$ v.s. $\text{Phy-DRL}_{\text{ran}}$ are shown in \cref{quaapcv} (d), where the training from the random conditions cause the large variance for policy learning. This could lead the policy to be sub-optimal, as also observed in \cite{mehta2020active}.

At the last, $\text{DRL}_{\text{CLF-wc}}$ has zero IE sample given the current training episodes. This suggests equipping DRL with only a worst-case sampling scheme cannot efficiently search for a safe action policy, and it needs a guide policy as in Phy-DRL \cite{Phydrl2} or more training steps.

\subsection{Quadruped Robot} \label{ccrobnt}
\cref{carpole} and \cref{quadffer} have focused on safety performance; thus, this section focuses on robustness evaluation of safe policy in quadruped locomotion. The mission of the action policies includes safe lane tracking along the x-axis and safe velocity regulation. We consider the following safety constraints:
\begin{align}
&\left| {\text{yaw}} \right| \le 0.17 ~\text{rad}, ~\left| {\text{CoM x-velocity} - r_{\text{x}}} \right| \le |r_{\text{x}}|, \\ \nonumber
& ~\left| {\text{CoM z-height} \!-\! 0.24 \!~\text{m}} \right| \le 0.13 \!~\text{m},  \label{safetysetrobot}
\end{align}
where $r_{\text{x}}$ denotes velocity reference or command. The model-based design follows \cite{Phydrl1}. 
We train and evaluate the policy in Pybullet simulator with varying road conditions.
For the worst-case training, we consider only one worst-case sample: forward velocity command $r_{\text{x}} = 1$ m/s and snow road (low friction on terrain). We perform comparisons with the other two models:  proportional-derivative (PD) controller developed in \cite{da2021learning} and pure data-driven DRL policy \cite{lillicrap2015continuous}. The Phy-DRL with the worst-case sampling is trained for only $10^{6}$ steps, while the pure data-driven DRL is trained for $10^{7}$ steps. 

We also consider a testing environment very different from a training environment:  backward velocity command $r_{\text{x}} = -1$ m/s and wet road. The yaw and CoM-velocity trajectories are shown in \cref{rfret}. In addition, another groups of experimental results for forward velocity command $r_{\text{x}}=  0.5$ m/s and snow road, and backward velocity command $r_{\text{x}}=  -0.4$ m/s and wet road are shown in \cref{rfretcc}.  From \cref{rfret}, \cref{rfretcc}, we can see that  compared with DRL and PD, the Phy-DRL's action policy (trained using only one worst-case sample) has much higher tracking performance under different road conditions while strictly following the safety regulations in \cref{safetysetrobot}. 

After training in the simulation, we also transferred the learned policy to the real robot; see a demonstration video available at \href{https://www.dropbox.com/scl/fi/4j6357blxnos4x1s7uj3p/final.mp4?rlkey=utvnrv23tgb64x3fh499yrij8&dl=0}{\color{blue} anonymous link}. We note that crossing the white lane in the video means violating the first safety regulation (i.e. $\left| {\text{yaw}} \right| \le 0.17 ~\text{rad}$) in \cref{safetysetrobot}. We also note that the video does not include pure data-driven DRL policy as we found that DRL policy can not make progress after training $10^{7}$ steps from scratch.

\section{Conclusion and Discussion} \label{conljus}
This paper proposes the sparse worst-case sampling for Phy-DRL training. The particular design aims of worst-case sampling include i) automatically avoiding state samples that are physically infeasible and ii) focusing the training on corner cases represented by worst-case samples. The spare worst-case sampling makes the Phy-DRL features much more efficient and fast training towards safety guarantee. 

Under worst-case sampling, a potential negative issue could be increased instability of the model-based policy in Phy-DRL, when training starts from boundary. To address this, Phy-DRL shall run on a fault-tolerant software architecture called Simplex \cite{sha2001using}. In Simplex, we use the Phy-DRL as the complex and high-performance controller, which may have unknown defects. Meanwhile, Simplex's high-assurance controller (HAC) is function-reduced and simplified but verified, and it only guarantees the system's basic stable and safe operations. HAC is thus complementary to Phy-DRL and coordinated by a monitor. For example, the monitor triggers the switch from Phy-DRL to HAC once the real-time system states (under the control of Phy-DRL) leave the safety envelope. In other words, the HAC takes over at the cost of lower performance. When the system returns to the safety set, the Phy-DRL can be restarted and control can be retaken.

\bibliographystyle{ACM-Reference-Format}
\bibliography{ref}

\clearpage
\appendix
\section{Appendix}
\subsection{Auxiliary Lemmas} \label{Aux}
\begin{lemma} [Positive Definiteness \cite{bhatia2009positive}]
A matrix $\mathbf{A} \in \mathbf{R}^{n \times n}$ is called positive definite if it is symmetric and all its eigenvalues are positive. In other words, there exists an orthogonal matrix $\mathbf{Q} \in \mathbf{R}^{n \times n}$, such that 
\begin{align}
\mathbf{Q}^{\top} \cdot \mathbf{A} \cdot \mathbf{Q} = \left[ {\begin{array}{*{20}{c}}
{{\lambda _1}}&0& \cdots &0\\
0&{{\lambda _2}}& \cdots &0\\
 \vdots & \vdots & \vdots &0\\
0&0& \cdots &{{\lambda _n}}
\end{array}} \right], \\
~~\text{with}~\lambda_1 > 0, ~\lambda_2 > 0, ~\ldots, ~\lambda_n > 0.  \nonumber
\end{align}
\label{positive}
\end{lemma}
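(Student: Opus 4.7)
The plan is to prove the characterization by invoking the real spectral theorem for symmetric matrices and then handling the equivalence between the sign condition on eigenvalues and positive definiteness. First, I would note that the statement really asserts two things: symmetry of $\mathbf{A}$ implies orthogonal diagonalizability, and the ``positive definite'' property (in the quadratic-form sense $\mathbf{x}^\top \mathbf{A}\mathbf{x} > 0$ for all nonzero $\mathbf{x}$) is equivalent to all diagonal entries of that diagonalization being positive. Since the lemma is cited from \cite{bhatia2009positive}, I would treat the spectral theorem as the main ingredient, rather than reproving it.

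Next, I would set up the orthogonal diagonalization. Given symmetric $\mathbf{A}$, the real spectral theorem yields an orthonormal basis $\mathbf{q}_1,\ldots,\mathbf{q}_n$ of eigenvectors with real eigenvalues $\lambda_1,\ldots,\lambda_n$. Assembling these into $\mathbf{Q} = [\mathbf{q}_1,\ldots,\mathbf{q}_n]$ gives an orthogonal matrix with $\mathbf{Q}^\top\mathbf{A}\mathbf{Q} = \mathrm{diag}(\lambda_1,\ldots,\lambda_n)$. This reduces the problem to analyzing a diagonal matrix.

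Then I would prove the two implications of the equivalence. For the forward direction, assume $\mathbf{A}$ is positive definite in the quadratic-form sense. Plugging in $\mathbf{x} = \mathbf{q}_i$ yields $\mathbf{q}_i^\top \mathbf{A}\mathbf{q}_i = \lambda_i \|\mathbf{q}_i\|^2 = \lambda_i > 0$, so every eigenvalue is positive. For the converse, suppose all $\lambda_i > 0$, and let $\mathbf{x} \in \mathbb{R}^n$ be nonzero. Setting $\mathbf{y} = \mathbf{Q}^\top \mathbf{x}$, orthogonality of $\mathbf{Q}$ gives $\|\mathbf{y}\| = \|\mathbf{x}\| \neq 0$, and then
\begin{equation}
\mathbf{x}^\top \mathbf{A}\mathbf{x} = \mathbf{y}^\top (\mathbf{Q}^\top \mathbf{A}\mathbf{Q})\mathbf{y} = \sum_{i=1}^{n} \lambda_i [\mathbf{y}]_i^2 > 0,
\end{equation}
confirming positive definiteness.

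The main obstacle is really the spectral theorem itself, which guarantees the existence of the orthogonal $\mathbf{Q}$ for symmetric $\mathbf{A}$; a full self-contained proof would require induction on dimension together with the fact that symmetric real matrices have real eigenvalues (e.g., via the Rayleigh quotient or the fundamental theorem of algebra applied to the characteristic polynomial). Since this is standard and the lemma is presented with a citation, I would simply appeal to \cite{bhatia2009positive} for the orthogonal diagonalization and keep the focus on the short eigenvalue-sign equivalence above.
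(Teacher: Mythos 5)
Your proof is correct. Note, however, that the paper itself gives no proof of this lemma: it is stated in the appendix as an auxiliary fact cited directly from \cite{bhatia2009positive}, so there is no in-paper argument to compare against. Your route --- invoking the real spectral theorem to obtain the orthogonal diagonalization $\mathbf{Q}^\top \mathbf{A}\mathbf{Q} = \mathrm{diag}(\lambda_1,\ldots,\lambda_n)$, then establishing the equivalence of ``all $\lambda_i > 0$'' with the quadratic-form definition by testing on eigenvectors in one direction and expanding $\mathbf{x}^\top\mathbf{A}\mathbf{x} = \sum_i \lambda_i [\mathbf{y}]_i^2$ with $\mathbf{y}=\mathbf{Q}^\top\mathbf{x}$ in the other --- is the standard argument and is exactly what the citation is standing in for. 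You are also right to flag that the only nontrivial ingredient is the spectral theorem itself, and deferring that to the reference is appropriate given how the lemma is presented; everything downstream in the paper (the proof of Lemma 3.2) uses only the diagonalization $\mathbf{P} = \mathbf{Q}(\mathbf{P})\Lambda(\mathbf{P})\mathbf{Q}^\top(\mathbf{P})$ with positive diagonal entries, which your argument fully justifies.
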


\subsection{Proof of \cref{solutionsboundary}} \label{pfworst}
    The $\mathbf{P} \succ 0$ means the matrix $\mathbf{P}$ is positive definite. In light of \cref{positive} in \cref{Aux}, there exists an orthogonal matrix $\mathbf{Q}(\mathbf{P}) \in \mathbb{R}^{n \times n}$, such that 
\begin{align}
&\mathbf{Q}^{\top}(\mathbf{P}) \cdot \mathbf{P} \cdot \mathbf{Q}(\mathbf{P}) \nonumber\\
&= \underbrace{\left[ {\begin{array}{*{20}{c}}
{{\lambda _1}(\mathbf{P})}&0& \cdots &0\\
0&{{\lambda _2}(\mathbf{P})}& \cdots &0\\
 \vdots & \vdots & \vdots &0\\
0&0& \cdots &{{\lambda _n}(\mathbf{P})}
\end{array}} \right]}_{\triangleq ~\Lambda(\mathbf{P})},\\
&~~\text{with}~\lambda_1(\mathbf{P}) > 0, ~\lambda_2(\mathbf{P}) > 0, ~\ldots, ~\lambda_n(\mathbf{P}) > 0.  \label{pso1}
\end{align}

Considering \cref{pso1}, the ${\mathbf{s}^\top}\cdot {\mathbf{P}} \cdot \mathbf{s} = \varphi$ equates to 
\begin{align}
{\mathbf{s}^\top}\cdot {\mathbf{P}} \cdot \mathbf{s} = {\mathbf{s}^\top}\cdot \mathbf{Q}(\mathbf{P}) \cdot \Lambda(\mathbf{P}) \cdot \mathbf{Q}^{\top}(\mathbf{P}) \cdot \mathbf{s} = \varphi, \label{pso2}
\end{align}
whose transformation utilizes a well-known property of orthogonal matrix: $\mathbf{Q}^{\top}(\mathbf{P}) = \mathbf{Q}^{-1}(\mathbf{P})$. 

Let us define $\mathbf{y} \triangleq \mathbf{Q}^{\top}(\mathbf{P}) \cdot \mathbf{s}$. Recalling the $\Lambda(\mathbf{P})$ defined in \cref{pso1}, \cref{pso2} equivalently transforms to 
\begin{align}
\sum\limits_{i = 1}^n {\frac{{{\lambda _i}(\mathbf{P})}}{\varphi}}  \cdot \mathbf{y}_i^2 = 1. \label{pso3}
\end{align}

We next use the mathematical induction to prove that \cref{solutionsboundary2} is the solution to the problem in \cref{pso3}. To complete this proof task, we first let $n = 2$. In this case, we observe from \cref{solutionsboundary2} that  
\begin{align}
[{\mathbf{y}}]_{1} = \sqrt{\frac{\varphi}{{{\lambda _1}( \mathbf{P})}}} \cdot \sin ({{\theta_1}}), ~~~[{\mathbf{y}}]_{2} = \sqrt{\frac{\varphi}{{{\lambda_2}( \mathbf{P})}}} \cdot \cos ({{\theta_1}}), \nonumber
\end{align}

which, in conjunction with the well-know triple angle identity $\sin^2(\theta_{1}) + \cos^2(\theta_{1}) = 1$, directly leads to the formula in \cref{pso3} with $n = 2$. We then consider the case of $n = p > 2$. Assuming \cref{solutionsboundary2} is the solution to the problem in \cref{pso3} with $n = p$, we have 
\begin{align}
\sum\limits_{i = 1}^p {\frac{{{\lambda _i}(\mathbf{P})}}{\varphi}}  \cdot \bar{\mathbf{y}}_i^2 = 1, \label{pso4}
\end{align}
where 
\begin{align}
 [\bar{{\mathbf{y}}}]_{i} \triangleq \begin{cases}
		\sqrt{\frac{\varphi}{{{\lambda _1}( \mathbf{P})}}} \cdot \sin ({{\theta_1}}) \cdot \prod\limits_{m = 2}^{p - 1} {\sin } ({{\theta _m}}), &i = 1 \\
        \sqrt{\frac{\varphi}{{{\lambda _i}( \mathbf{P})}}} \cdot \cos ({{\theta_{i-1}}}) \cdot \prod\limits_{m = i}^{p - 1} {\sin } ({{\theta _m}}), &i \ge 2 
	\end{cases}. \label{pso5}
\end{align}

Based on this observation, we need to prove that \cref{solutionsboundary2} is the solution to the problem in \cref{pso3} with $n = p+1$. For the  $n = p+1$ and $\mathbf{y} \in \mathbb{R}^{p+1}$, according to \cref{solutionsboundary2} to have 
\begin{align}
 [{\mathbf{y}}]_{i} \triangleq \begin{cases}
		\sqrt{\frac{\varphi}{{{\lambda _1}( \mathbf{P})}}} \cdot \sin ({{\theta_1}}) \cdot \prod\limits_{m = 2}^{p} {\sin } ({{\theta _m}}), &i = 1 \\
        \sqrt{\frac{\varphi}{{{\lambda _i}( \mathbf{P})}}}\cdot \cos ({{\theta_{i-1}}}) \cdot \prod\limits_{m = i}^{p} {\sin } ({{\theta _m}}), &i \ge 2 
	\end{cases} \nonumber
\end{align}
which is equivalent to 
\begin{align}
 [{\mathbf{y}}]_{i} \triangleq \begin{cases}
		\sqrt{\frac{\varphi}{{{\lambda _1}( \mathbf{P})}}} \cdot \sin ({{\theta_1}}) \cdot \prod\limits_{m = 2}^{p-1} {\sin } ({{\theta _m}}) \cdot {\sin } ({{\theta_{p}}}), &i = 1 \\
        \sqrt{\frac{\varphi}{{{\lambda _i}( \mathbf{P})}}} \cdot \cos ({{\theta_{i-1}}}) \cdot \prod\limits_{m = i}^{p-1} {\sin }({{\theta _m}}) \cdot {\sin } ({{\theta_{p}}}), &2 \le i \le p \\
        \sqrt{\frac{\varphi}{{{\lambda _{p+1}}( \mathbf{P})}}} \cdot {\cos} ({{\theta_{p}}}), &i = p + 1
	\end{cases}. \label{pso6}
\end{align}
Observing \cref{pso6} and \cref{pso4} with its solution in \cref{pso5}, we have
\begin{align}
&\sum\limits_{i = 1}^{p+1} {\frac{{{\lambda _i}(\mathbf{P})}}{\varphi}}  \cdot \mathbf{y}_i^2 =  \sum\limits_{i = 1}^{p} {\frac{{{\lambda _i}(\mathbf{P})}}{\varphi}}  \cdot \mathbf{y}_i^2 + {\frac{{{\lambda_{p+1}}(\mathbf{P})}}{\varphi}}  \cdot \mathbf{y}_{p+1}^2 \\ \nonumber
&= {\sin}^{2} ({{\theta_{p}}}) \cdot \sum\limits_{i = 1}^{p} {\frac{{{\lambda _i}(\mathbf{P})}}{\varphi}}  \cdot \bar{\mathbf{y}}_i^2 + {\frac{{{\lambda_{p+1}}(\mathbf{P})}}{\varphi}}  \cdot \mathbf{y}_{p+1}^2 \nonumber\\
&= {\sin}^{2} ({{\theta_{p}}}) + {\cos}^{2} ({{\theta_{p}}}) = 1, \nonumber
\end{align}
by which we can conclude that \cref{pso6} is the solution to \cref{pso3} with $n = p+1$. Hereto, we can conclude that the formula in \cref{solutionsboundary2} solves the problem in \cref{pso3}. 

We are finally, recalling the definition of an orthogonal matrix, i.e., $\mathbf{Q}(\mathbf{P}) \cdot \mathbf{Q}^{-1}(\mathbf{P}) = \mathbf{Q}(\mathbf{P}) \cdot \mathbf{Q}^{\top}(\mathbf{P}) = \mathbf{I}_{n}$, we can directly obtain \cref{solutionsboundary2} from the notation $\mathbf{y} \triangleq \mathbf{Q}^{\top}(\mathbf{P}) \cdot \mathbf{s}$. 

\subsection{Ablation study}

\label{additional}
\begin{figure*}[h]
    \centering
    \subfloat[worst-case]{\includegraphics[width=0.25\textwidth]{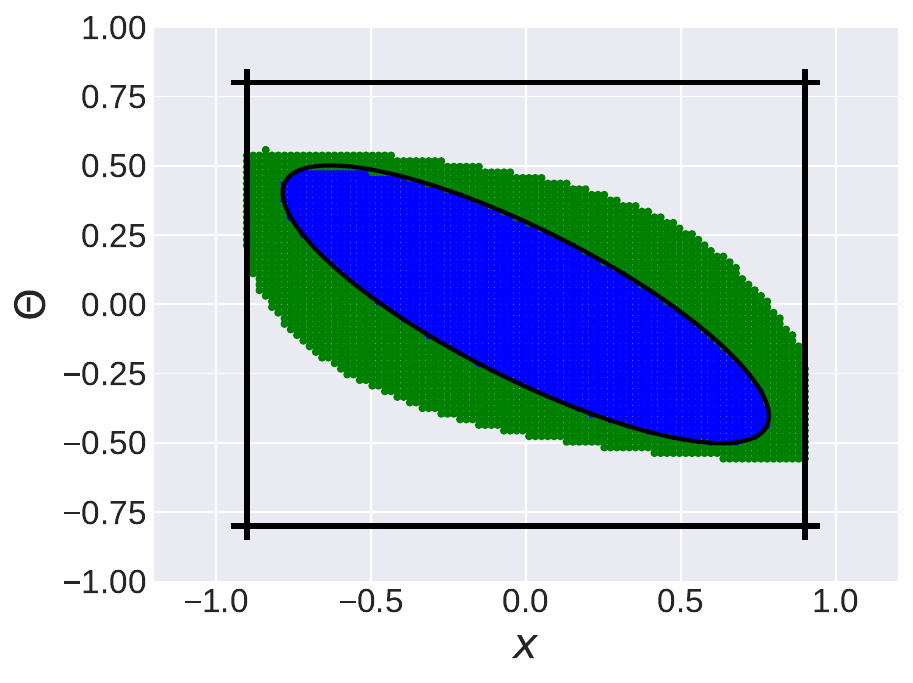}} 
    \centering
    \subfloat[random]{\includegraphics[width=0.25\textwidth]{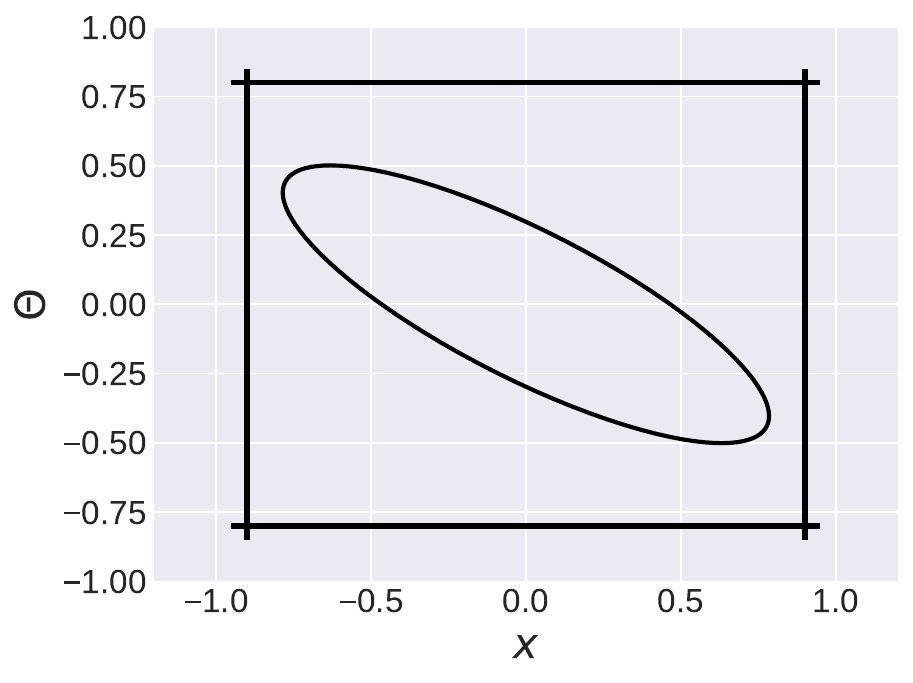}} 
    \centering
    \subfloat[worst-case-w.t]{\includegraphics[width=0.25\textwidth]{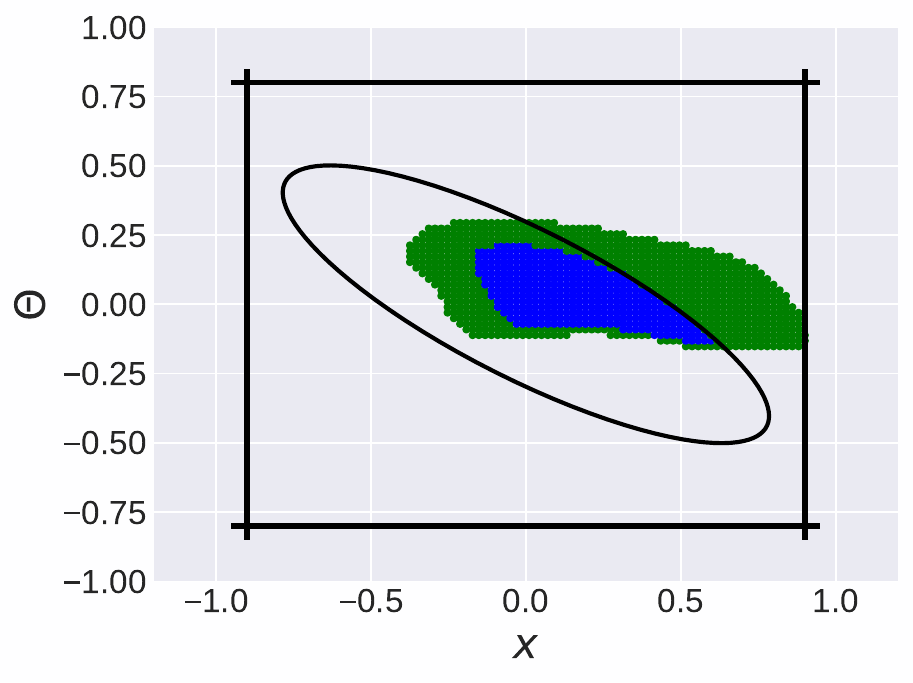}} 
    \centering
    \subfloat[random-w.t]{\includegraphics[width=0.25\textwidth]{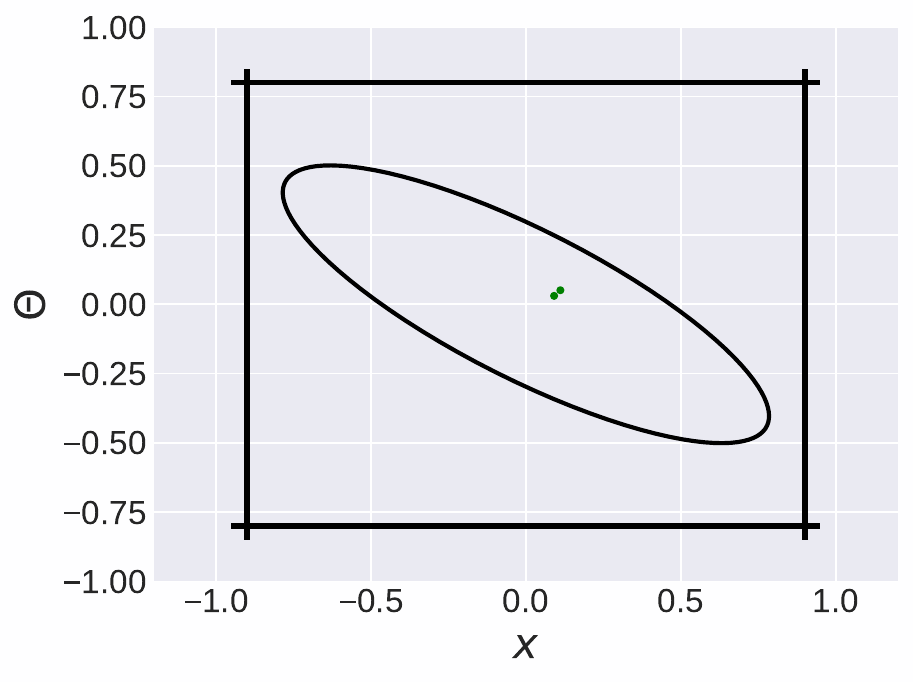}} 
    \centering\\
    \subfloat[worst-case]{\includegraphics[width=0.25\textwidth]{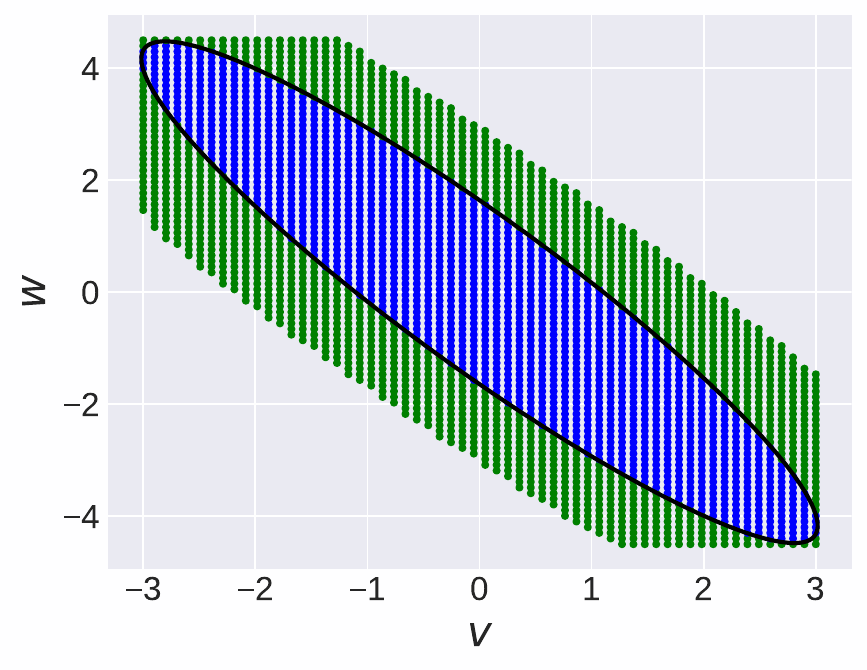}} 
    \centering
    \subfloat[random]{\includegraphics[width=0.25\textwidth]{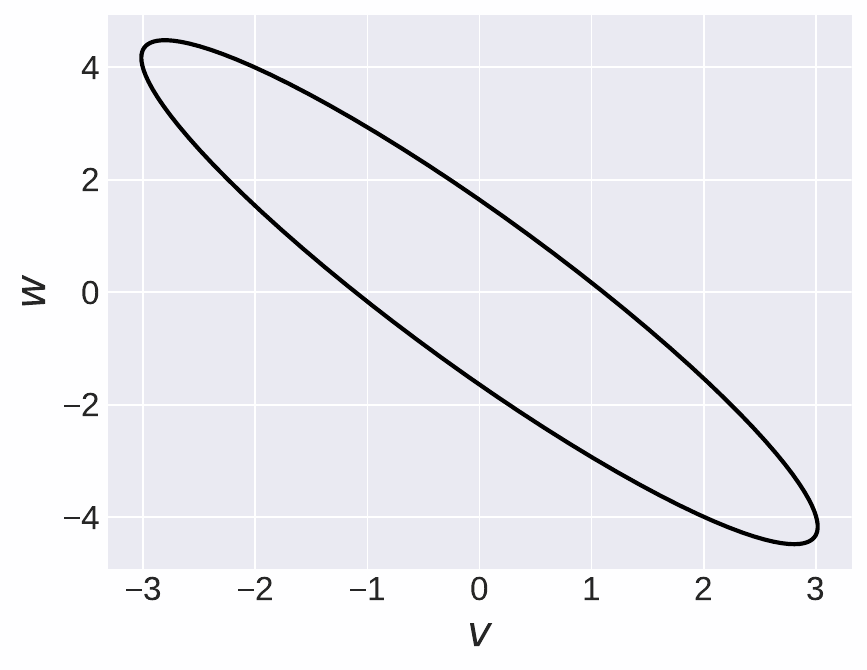}} 
    \centering
    \subfloat[worst-case-w.t]{\includegraphics[width=0.25\textwidth]{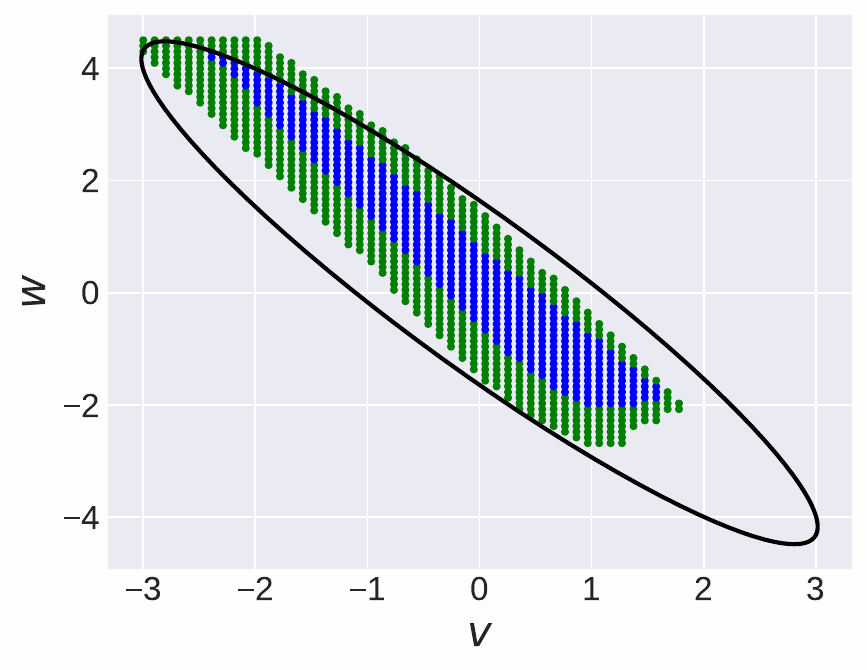}} 
    \centering
    \subfloat[random-w.t]{\includegraphics[width=0.25\textwidth]{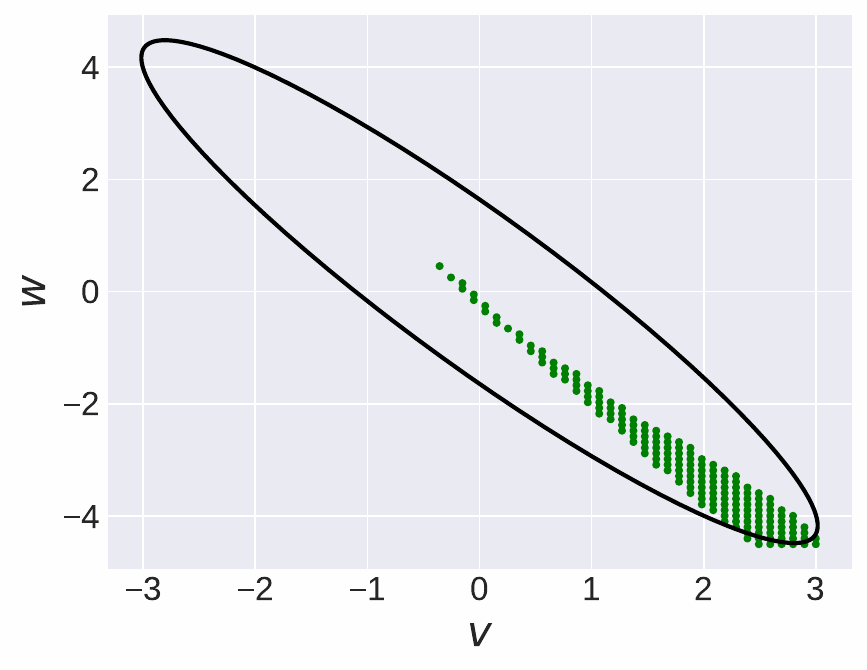}} 
    \centering
    \vspace{-0.0cm}
\caption{\textbf{(2-3)} Worst-case Sampling \textit{v.s.} Random Sampling, with and without termination condition. Blue: area of IE samples. Green: area of EE samples. Ellipse area: safety envelope.}
\label{addexx1exp1}
\end{figure*}

\begin{figure*}[http]
    \centering
    \subfloat[worst-case]{\includegraphics[width=0.25\textwidth]{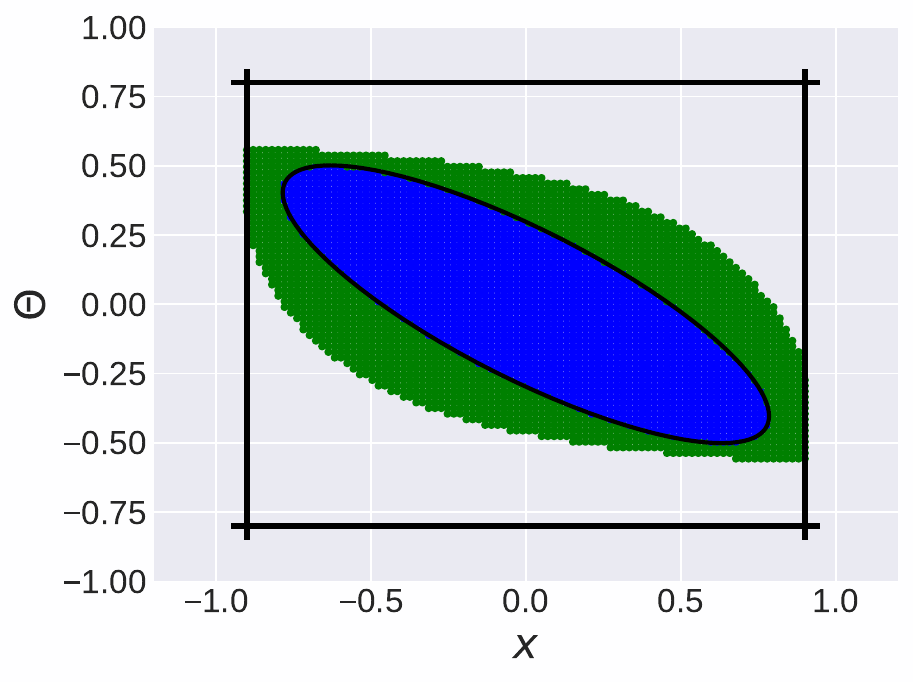}} 
    \centering
    \subfloat[random]{\includegraphics[width=0.25\textwidth]{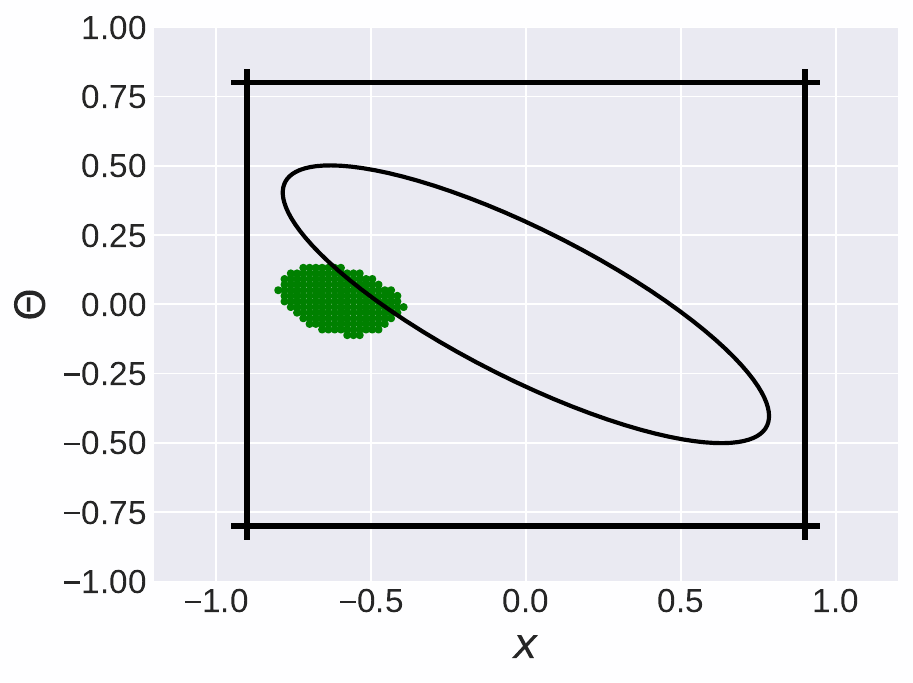}} 
    \centering
    \subfloat[worst-case-w.t]{\includegraphics[width=0.25\textwidth]{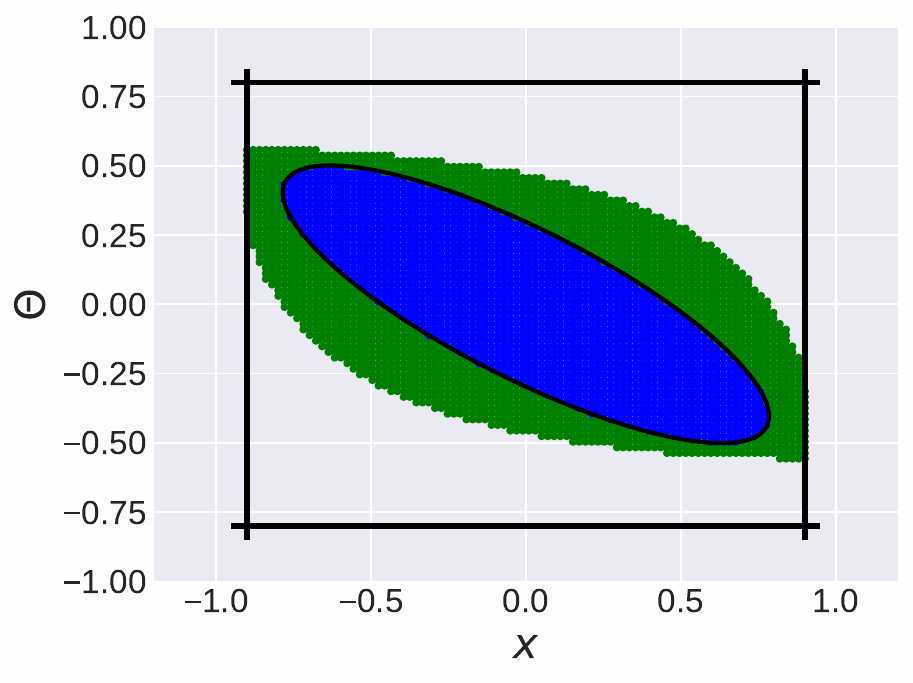}} 
    \centering
    \subfloat[random-w.t]{\includegraphics[width=0.25\textwidth]{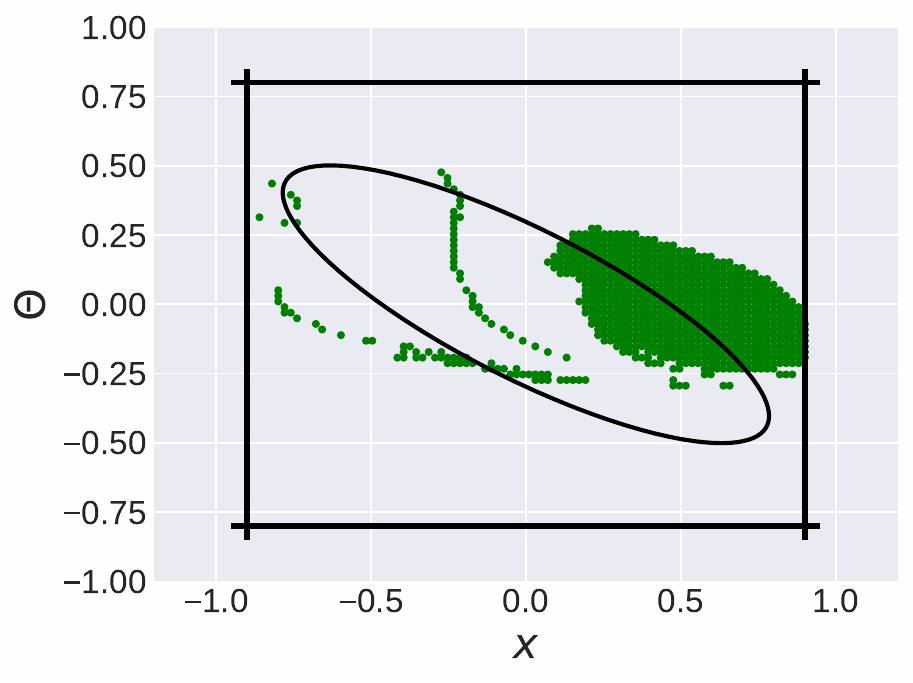}} \\
    \centering
    \subfloat[worst-case]{\includegraphics[width=0.25\textwidth]{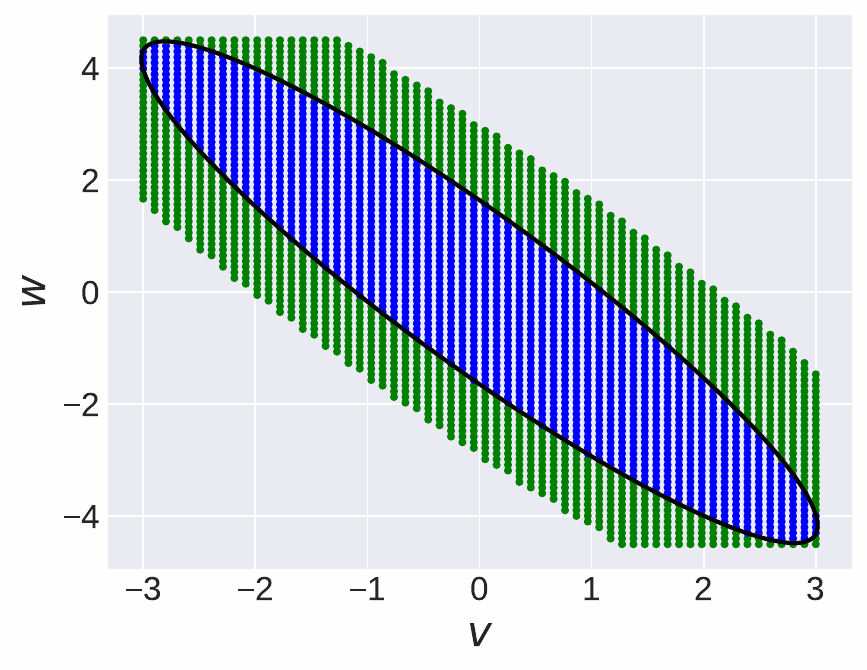}} 
    \centering
    \subfloat[random]{\includegraphics[width=0.25\textwidth]{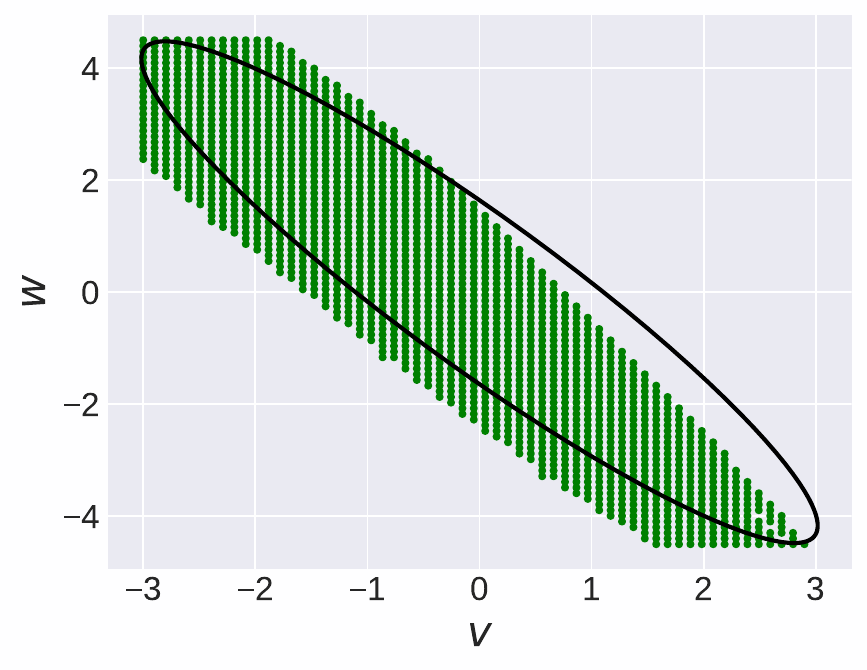}} 
    \centering
    \subfloat[worst-case-w.t]{\includegraphics[width=0.25\textwidth]{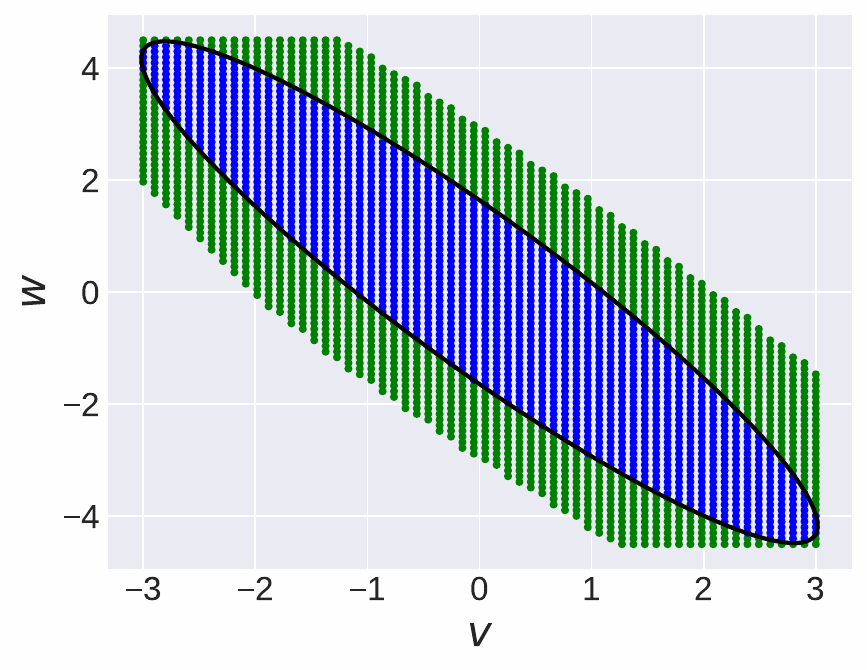}} 
    \centering
    \subfloat[random-w.t]{\includegraphics[width=0.25\textwidth]{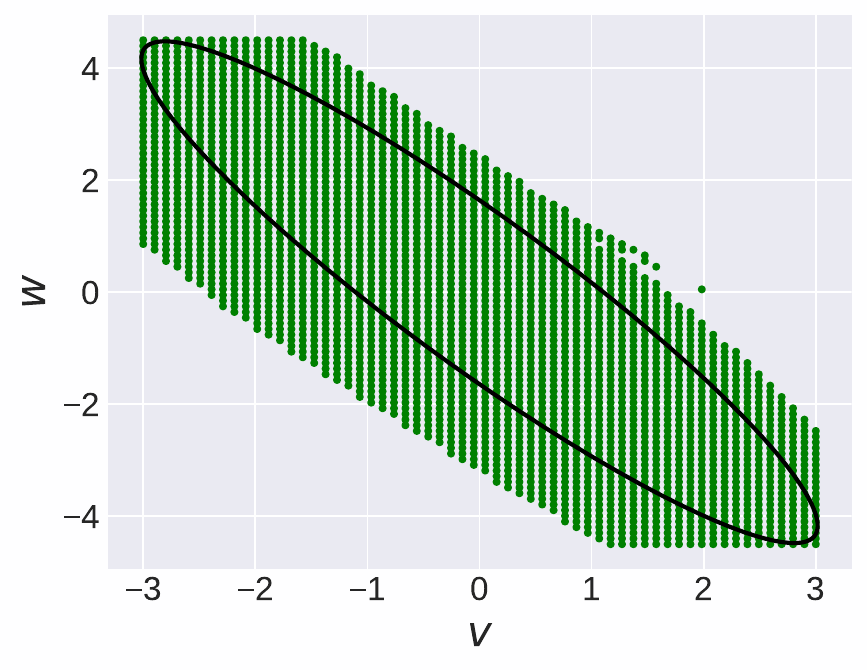}} 
    \centering
\caption{\textbf{(2-4)} Worst-case Sampling \textit{v.s.} Random Sampling, with and without termination condition. Blue: area of IE samples. Green: area of EE samples. Ellipse area: safety envelope.}
\label{addexx1exp2}
\end{figure*}

\end{document}